\documentclass{article}

\usepackage[preprint]{neurips_2026} % Preprint version (arXiv etc.)

\usepackage[utf8]{inputenc} % allow utf-8 input
\usepackage[T1]{fontenc}    % use 8-bit T1 fonts
\usepackage{hyperref}       % hyperlinks
\usepackage{url}            % simple URL typesetting
\usepackage{booktabs}       % professional-quality tables
\usepackage{amsfonts}       % blackboard math symbols
\usepackage{nicefrac}       % compact symbols for 1/2, etc.
\usepackage{microtype}      % microtypography
\usepackage{xcolor}         % colours

\usepackage{graphicx}
\usepackage{subcaption}
\usepackage{wrapfig}        % text wrapping around small figures
\usepackage{bm} % for bold math symbols
\usepackage{multirow}
\usepackage{amsmath}
\usepackage{amssymb}
\usepackage{mathtools}
\usepackage{amsthm}
\usepackage{enumitem}
\usepackage{algorithm}
\usepackage{algpseudocode}

\makeatletter
\providecommand{\theHALG@line}{}
\renewcommand{\theHALG@line}{\thealgorithm.\arabic{ALG@line}}
\makeatother

\usepackage[capitalise,noabbrev]{cleveref}
\crefname{appendix}{appendix}{appendices}
\Crefname{appendix}{Appendix}{Appendices}

\usepackage{aliascnt}
\newcommand{\sharedtheorem}[2]{%
  \newaliascnt{#1}{theorem}%
  \newtheorem{#1}[#1]{#2}%
  \aliascntresetthe{#1}%
}
\theoremstyle{plain}
\newtheorem{theorem}{Theorem}[section]
\sharedtheorem{corollary}{Corollary}
\theoremstyle{definition}
\sharedtheorem{assumption}{Assumption}
\theoremstyle{remark}
\sharedtheorem{remark}{Remark}
\crefname{assumption}{assumption}{assumptions}
\Crefname{assumption}{Assumption}{Assumptions}
\crefname{corollary}{corollary}{corollaries}
\Crefname{corollary}{Corollary}{Corollaries}
\crefname{remark}{remark}{remarks}
\Crefname{remark}{Remark}{Remarks}

\usepackage[textsize=tiny]{todonotes}

\newcommand{\samuon}{SAMuon}
\newcommand{\samuonlite}{SAMuon-lite}

\title{Spectral Allocation: Why Muon Outperforms Adam, and How to Improve Muon}

\author{%
  Xiaodong Wu\thanks{Corresponding author: \texttt{xw338@cam.ac.uk}} \\
  Department of Engineering\\
  University of Cambridge\\
  \texttt{xw338@cam.ac.uk} \\
  \And
  Wenyi Yu \\
  Department of Electronic Engineering\\
  Tsinghua University \\
  \And
  Chao Zhang \\
  Department of Electronic Engineering\\
  Tsinghua University \\
  \And
  Philip Woodland \\
  Department of Engineering\\
  University of Cambridge \\
}

\begin{document}

\maketitle

\begin{abstract}
Orthogonal optimisers such as Muon can substantially accelerate large language model pretraining relative to Adam, yet the mechanism remains incompletely understood. We investigate this through an \emph{out-of-sample spectral probing analysis} of Transformer loss landscapes. At checkpoints along real training trajectories, we decompose each momentum buffer into its singular directions and estimate the loss-optimal step size along each direction on held-out data. The resulting spectral profile is anisotropic yet stable across batches and training stages, and consistent across the optimisers and model scales: a \emph{volatile head} operating at the Edge-of-Stability supports a much smaller step size than the \emph{tolerant bulk}, which permits substantially larger steps. This profile provides a unified \emph{spectral allocation} account of why Muon outperforms Adam, which outperforms SGD. It also exposes a limitation of Muon's uniform scaling: it still underutilises the bulk. Guided by this finding, we introduce \textbf{Spectral-Aware Muon (SAMuon)}, which holds the head at the Muon scale and amplifies the bulk using a \emph{static spectral prior}. We provide two variants: the complete SAMuon follows the measured profile using a low-rank randomised SVD and the simplified SAMuon-lite uses a two-level approximation via rank-one power iteration. Neither method adds persistent optimiser state or notable extra FLOPs beyond Muon at scale, and the idealised exact-whitening versions of both retain Muon's asymptotic convergence rate under standard assumptions. Across ``modded-nanogpt'' models from 124M to 1B parameters, both variants outperform tuned AdamW and Muon (Scion implementation) baselines in all evaluated model-scale and batch-size configurations. SAMuon requires $13.3\%$--$24.0\%$ fewer training tokens to reach the same validation loss as Muon, while \samuonlite{} retains most of this gain with near-zero wall-clock overhead.

\end{abstract}

\section{Introduction}
\label{sec:intro}
The pretraining of Transformer-based decoder-only Large Language Models (LLMs) \citep{GPT-2} is constrained by the efficiency of optimisers like Adam \citep{adam}. Recently, orthogonal optimisers such as Muon \citep{muon, moonlight} have emerged as a promising research direction. By whitening the (momentum) gradient matrix to generate updates, these methods achieve clear convergence acceleration compared to Adam, particularly during the transient phase (the early, non-asymptotic stage) of pretraining \citep{FantasticOpt}.

The empirical success of orthogonal optimisers has prompted a number of theoretical explanations, yet the true source of their fast convergence remains an open question. The dominant view frames Muon as spectral-norm steepest descent \citep{OldOptNewNorm, MuonSpectralNorm, ModularDuality, MuonTrustRegion}, which accounts for its stability and learning-rate transfer but, as \citet{polargrad} note, does not by itself explain the observed convergence advantage on highly curved pretraining objectives, since whitening ``effectively removes all curvature information''. A more recent line of explanation turns directly to curvature: \citet{isotropic} showed that orthogonalisation is loss-optimal under an \emph{isotropic} curvature model, and concurrent work by \citet{MuonCurvature} attributes the advantage of Muon over Adam and gradient descent to Muon's lower directional-sharpness penalty. These insights are real progress. However, the isotropic curvature model \citep{isotropic} is, as our measurements show, violated in practice. Meanwhile, the analysis in \citet{MuonCurvature} relies on an aggregate, \emph{in-sample} Hessian diagnostic (computed only on the gradient batch) which compresses the update's entire spectral geometry into a single scalar and consequently misses the crucial curvature information hidden in the full spectrum. In this paper, we directly probe the behaviour of the full spectrum of the momentum buffer on held-out batches, the exact behaviour we aim to improve in practical training scenarios. This allows us to ask the key question that previous work leaves open: \textbf{what is the curvature behaviour along each spectral direction of the momentum buffer in real Transformer training, and what does this imply for optimiser design?} Answering it gives a \emph{spectral lens} on the loss landscape and a unified view for understanding and improving optimisers.

Our paper is based on an \emph{out-of-sample} spectral probing analysis. Probing the singular basis of the momentum buffer that orthogonal optimisers whiten, we measure the optimal step size that the loss landscape of held-out batches allows along each singular direction. The resulting loss-optimal step profile is highly anisotropic and stable across different batches, training stages and model scales. The profile is highly forgiving across the \emph{tolerant bulk} and drops sharply near a single \emph{volatile head}, with the decline being close to linear in log rank over the leading ranks. Viewed in this way, the major optimiser families fall on a single axis: momentum-based stochastic gradient descent (SGD) allocates step size in proportion to each singular value, exactly opposite to what the landscape suggests; the coordinate-wise rescaling of Adam dampens but cannot overcome this misallocation; and Muon, by whitening to a uniform scale, reallocates step mass from the head into the more tolerant bulk. This yields, to our knowledge, the first unified \textbf{Spectral Allocation} account of why Muon outperforms Adam, which in turn outperforms SGD in Transformer pretraining.

In addition, the same view suggests that there is still room for improvement in the uniform scaling of Muon. The dominant singular direction, termed the \textbf{Volatile Head}, carries the majority of the gradient signal yet allows the smallest step size. It sits at the ``Edge-of-Stability'' (EoS) \citep{EoS} and its optimal step size coincides with the set global learning rate. Meanwhile, the remaining \textbf{Tolerant Bulk} allows step sizes several times larger and is mostly flat beyond the leading few ranks. Therefore, the uniform spectral allocation of Muon realises only a small fraction of the per-iteration loss reduction that an idealised, per-rank-optimal spectral allocation would permit (the right panel of \cref{fig:lr_analysis}). This loss reduction could likely be recovered by amplifying the bulk while holding the head at the Muon scale.

To make use of this potential improvement without incurring significant computational overhead, we propose \textbf{Spectral-Aware Muon (SAMuon)}. Rather than learning the profile online like SOAP, which applies Adam in Shampoo's eigenbasis \citep{SOAP}, \samuon{} imposes a \emph{static}, head-anchored prior on the whitened update. Two algorithms \emph{hold} the volatile head at the Muon scale that limits the step and increase the scale of the rest of the spectrum. Two variants of the \samuon{} algorithm are provided: the complete \samuon{} algorithm employs a profile that closely follows the empirically measured log-rank-linear shape through a rank-$k$ randomised singular value decomposition (SVD) of the momentum buffer at each update step; the simplified \samuonlite{} algorithm keeps a deliberately coarse two-level profile which amplifies the whole bulk uniformly to scale $\gamma$ while pinning only the head using power iteration at each step. Both variants add no persistent optimiser state beyond the single buffer of Muon and negligible extra floating-point operations (FLOPs). Note that both methods introduce only a single tuned scale hyperparameter $\gamma$; setting $\gamma = 1$ recovers Muon for both variants.

We validate both \samuon{} variants on ``modded-nanogpt'' \citep{moddednanogpt} pretraining following the protocol in \citet{scion}, across models from 124M to 1B parameters and batch sizes from 1024 to 4096. Hyperparameters are tuned per batch size at 124M and transferred across model scales. The 124M budget is ${\sim}4\times$ the Chinchilla-optimal allocation \citep{chinchilla}, placing training in the over-trained regime where optimiser gains are hardest to retain \citep{FantasticOpt}. By better exploiting the tolerant bulk, the complete \samuon{} improves on AdamW and the Muon (Scion implementation) baseline in each trained model and batch size combination, reaching the same final validation loss as Muon with $13.3\%$--$24.0\%$ fewer tokens. \samuonlite{} recovers most of this improvement at nearly the same wall-clock speed as Muon due to the more efficient implementation.

\textbf{Our contributions:}

\textbf{1) Out-of-sample spectral probing and a unified account of optimiser performance.} To answer the question of why the uniform whitening of Muon works so well with the loss landscape in Transformer pretraining, we introduce an out-of-sample spectral probing framework which measures the optimal step size that the Transformer loss landscape tolerates along each singular direction of the momentum buffer. It identifies a training-stable structure: a profile that is mostly flat across the bulk and drops sharply near a single volatile head. Viewed as a spectral allocation, this gives a unified account of why the performance of Muon exceeds that of Adam, which in turn exceeds that of SGD. It also provides insights into how to improve Muon.

\textbf{2) The SAMuon optimisers.} Based on the findings of the spectral probing analysis, we propose two variants of Spectral-Aware Muon (SAMuon), which distil this measured structure into two static, head-anchored priors on the spectrum of the momentum buffer. \samuon{} follows the measured log-rank-linear shape via a rank-$k$ randomised SVD at each update step, and \samuonlite{} keeps a two-level simplification via power iteration at each step, with nearly the same wall-clock speed as Muon. Neither adds persistent optimiser state over Muon. Both add only one additional tunable hyperparameter and negligible extra FLOPs.

\textbf{3) Empirical and theoretical validation.} On the ``modded-nanogpt'' pretraining benchmark, using models from 124M to 1B parameters and matched hyperparameter tuning for all methods, both proposed variants improve over AdamW and the strong Muon (Scion \citep{scion}) baseline in each trained model and batch size combination. \samuon{} shows a $13.3\%$--$24.0\%$ token-efficiency improvement over Muon, and \samuonlite{} matches or nearly matches this performance with a token-efficiency improvement of $13.3\%$--$22.1\%$. A convergence guarantee covering the idealised exact-whitening versions of both variants is also provided.

\section{Related work}
\label{sec:related}

\textbf{Theoretical Interpretations of Muon's Success.}
The prevailing view of Muon is norm-geometric, \textit{i.e.} orthogonalisation is the steepest-descent step under the spectral norm. This view is developed along three threads: as a norm/duality derivation of the update \citep{OldOptNewNorm, ModularDuality, muon, large2024modular, yang2023spectral, MuonSpectralNorm}, as a linear-minimisation-oracle or non-Euclidean trust-region step with attendant convergence guarantees \citep{scion, MuonTrustRegion, lihong2025convergence} and as an implicit bias towards spectral-norm max-margin solutions \citep{fan2025implicit}. Collectively, these works establish that orthogonalisation is the steepest-descent step for the linearised loss under a constraint on the update's spectral norm. Such spectral-norm control yields width- and depth-robust update scales and learning-rate transfer, while ensuring that the resulting iterates carry a characterisable spectral-norm max-margin bias. What they do not address is the curvature of the landscape \emph{inside} the norm ball, which is the most direct link to the training acceleration. \citet{isotropic} makes this missing premise explicit, showing that gradient orthogonalisation is loss-optimal under an \emph{isotropic} curvature model in which all post-whitening directions are equally easy to descend, an assumption our measurements show is violated in real Transformer training. Most recently, and arguably closest to ours, the concurrent work of \citet{MuonCurvature} shows that the advantage of Muon over Adam and gradient descent stems from a smaller normalised directional-sharpness penalty, achieved by balancing update energy across curvature groups, rather than from spectral-norm properties; they further verify that the \emph{in-sample} Hessian eigenbasis aligns closely with the gradient's singular basis, directly supporting the per-basis construction of our spectral probes. Our spectral probing analysis can be viewed as an extension of this work, measuring the full anisotropic profile at per-rank granularity along real Transformer training trajectories in an \emph{out-of-sample} manner. This measured out-of-sample spectral profile is more closely aligned with practical training scenarios (where convergence on unseen batches is concerned) and enables us to develop a low-cost, improved Muon optimiser (SAMuon), whose notable empirical improvement, in turn, verifies this account.

\textbf{Spectral Analysis of Deep Neural Networks.} Empirical studies of the Hessian eigenspectrum have long established that neural network loss landscapes are highly anisotropic, the spectrum splitting into a bulk concentrated near zero and a small number of outlier eigenvalues orders of magnitude larger \citep{EmpiricalHessian}. For Transformers, \citet{adamtransformer} analyse this structure at the granularity of parameter blocks, attributing the failure of single-learning-rate SGD, relative to coordinate-wise Adam, to this inter-block curvature disparity, \textit{i.e.}\ ``block heterogeneity''. Complementary to their work, we argue that the distribution of curvature \emph{within} each block across the singular directions of the update is another key aspect that separates orthogonal optimisers from Adam (\cref{sec:optimiser_implications}). There is also a body of work discussing the spectral anisotropy of the gradient itself \citep{SUMO,AniGrad1,lowrankMuonBad}, instead of the curvature behaviour that is our focus. Specifically, \citet{lowrankMuonBad} provides independent evidence for our central finding: truncating the bottom singular directions of the Muon update markedly degrades its performance, consistent with our finding that the advantage of Muon resides in better utilising the spectral bulk. 

\textbf{Adaptive Methods and Practical Muon Variants.} Beyond convergence speed, a central appeal of Muon \citep{muon} is its low memory overhead. Muon maintains only a single momentum buffer, rather than the two moments used by Adam \citep{adam} and the even heavier memory requirement of most second-order methods. Kronecker-Factored Approximate Curvature (K-FAC) \citep{K-FAC}, Shampoo \citep{Shampoo}, and SOAP \citep{SOAP} precondition with Kronecker-factored curvature statistics whose factors scale with model size, and some recent Muon variants reintroduce the same burden in pursuit of adaptivity: AdaMuon \citep{adamuon} adds an element-wise second moment, Newton--Muon \citep{newtonmuon} an explicit activation-covariance right preconditioner, and Mousse \citep{mousse} a full set of Shampoo Kronecker factors. Each improves on Muon but loses its memory advantage, carrying additional state that scales with the model. A complementary hybrid, COSMOS \citep{cosmos}, instead confines adaptive (SOAP-style) preconditioning to the leading eigen-subspace of the gradient and applies a low-cost orthogonalised update to the remainder. Its design echoes our central finding, but is more complex than our head-only static treatment. LiMuon \citep{limuon} likewise applies a randomised low-rank SVD to the momentum buffer, but towards memory compression of the update rather than shaping its spectral allocation, \textit{i.e.} the same computational primitive as \samuon{} but with a different purpose. Two lines of research instead keep the footprint close to the single buffer used by Muon. NorMuon \citep{normuon} adds only a per-neuron second-moment vector to rescale the orthogonalised update; its operation is complementary to ours and potentially integrates with our method rather than competing with it. Scion \citep{scion} is a renormalised Muon which also replaces the auxiliary Adam optimiser used for embeddings with a lightweight signed update. \samuon{} retains this single-buffer footprint while exploiting the within-block spectral structure through a fixed, measurement-derived shaping of the update spectrum, at negligible additional FLOPs and with a single \emph{tuned} extra hyperparameter ($\gamma$; the profile rank of \samuon{} follows a fixed width rule and is not tuned). Note that we build our work directly on Scion, which is our primary baseline and is referred to as Muon in \cref{sec:experiments}.

\section{Preliminaries}
\label{sec:preliminaries}
The Muon optimiser \citep{muon} operates independently on each 2D weight matrix $\mathbf{W}^{(t)} \in \mathbb{R}^{m \times n}$ of the target model, where we take $m \le n$ without loss of generality. At step $t$, given the mini-batch gradient $\mathbf{G}^{(t)} = \nabla_{\mathbf{W}} \mathcal{L}(\mathbf{W}^{(t)}; \mathcal{B}_t)$, Muon first integrates the gradient into a momentum buffer $\mathbf{M}^{(t)}$, and it is this buffer that is orthogonalised to produce the update:
\begin{align}
    \mathbf{M}^{(t)} &= \mu\, \mathbf{M}^{(t-1)} + (1-\mu)\, \mathbf{G}^{(t)}, \label{eqn:muon_momentum}\\
    \mathbf{O}^{(t)} &= \mathrm{NS}\big(\mathbf{M}^{(t)}\big) \;\approx\; \big(\mathbf{M}^{(t)} \mathbf{M}^{(t)\top}\big)^{-1/2}\, \mathbf{M}^{(t)} \;=\; \mathbf{U}\mathbf{V}^\top, \label{eqn:muon_whitening}\\
    \mathbf{W}^{(t+1)} &= \mathbf{W}^{(t)} - \eta\, \kappa\, \mathbf{O}^{(t)}, \label{eqn:muon_step}
\end{align}
where $\mu$ is the momentum coefficient, $\eta$ is the learning rate and $\kappa$ is a fixed optimiser-specific scaling factor. For the Scion implementation of Muon used in our paper, the fixed scale for a weight matrix is $\kappa = \sqrt{d_{\mathrm{out}}/d_{\mathrm{in}}}$, where $d_{\mathrm{out}}$ and $d_{\mathrm{in}}$ are the matrix's original output and input dimensions, respectively \citep{scion}. For simplicity, we also assume $\mathbf{M}^{(t)}$ to have full row rank (with rank $r = m$). Here and throughout the paper, we write the (reduced) SVD of any matrix as $\mathbf{A} = \mathbf{U}\mathbf{S}\mathbf{V}^\top = \sum_{i} \sigma_i\, \bm u_i \bm v_i^\top$, where $\mathbf{S}$ is the diagonal matrix of singular values $\sigma_1 \ge \sigma_2 \ge \dots \ge \sigma_m \ge 0$, $\bm u_i, \bm v_i$ are the corresponding left and right singular vectors, and the index $i$ is the \emph{spectral rank index} of the singular component $\sigma_i \bm u_i \bm v_i^\top$; in \cref{eqn:muon_whitening}, $\mathbf{U}$ and $\mathbf{V}$ are from this SVD of the buffer $\mathbf{M}^{(t)}$. The operator $\mathrm{NS}(\cdot)$ denotes the Newton-Schulz iteration, which approximates the polar factor $\mathbf{U}\mathbf{V}^\top$ of its input using only a small number of matrix products, avoiding an explicit SVD. The iteration-specific coefficients and finite-iteration response used in our experiments are reported in \cref{app:ns_parameters}. In spectral terms, exact whitening maps every non-zero singular value to one. Thus, under exact whitening, all $m$ singular components within each weight matrix receive the same coefficient $\eta\kappa$; the custom finite $4 \times 3$ scheme closely approximates this uniform allocation over the operative range. The analysis in this paper focuses on this uniform spectral scaling.

\section{Spectral probing for the loss-optimal step profile}
\label{sec:linesearch_analysis}

The Muon optimiser \citep{muon} achieves rapid convergence by applying a uniform scaling to the spectrum of the momentum buffer and removing gradient anisotropy. To find \textit{why} this isotropic whitening improves over methods such as SGD and Adam, and to understand its structural suboptimality, we must investigate the true underlying geometry of the loss landscape. To this end, we conduct a fine-grained spectral analysis of the Transformer loss landscapes, measuring the loss-optimal step across the target spectrum.

\subsection{Methodology: Spectral probing}
\label{subsec:probing_method}

To construct the optimal step profile of the loss landscape, we propose a \textit{Spectral Probing} procedure that measures the loss landscape along each direction in the spectral basis of the momentum buffer across the entire model in an offline manner, drawing on the directional-probing methodology of \citet{iEF}.

\textbf{Per-Rank Probe Construction.}
Let $\mathcal{L}(\bm{\theta}_t; \mathcal{B})$ be the loss on a batch $\mathcal{B}$ at the checkpoint parameter state $\bm{\theta}_t$, where $t$ denotes the training iteration of the checkpoint. We probe the spectrum of the updated momentum buffer for the $N_{\mathrm W}=72$ two-dimensional weight matrices in the model, indexed by $j$. For each matrix, we integrate the gradient $\mathbf{G}_j$ of a \emph{gradient batch} $\mathcal{B}_{\mathrm{g}}$ into the momentum buffer $\mathbf{M}_j$ stored at the checkpoint, $\mathbf{M}_j \leftarrow \mu\, \mathbf{M}_j + (1-\mu)\, \mathbf{G}_j$ (with $\mu$ the momentum coefficient of the underlying optimiser), exactly reproducing the buffer state the optimiser operates on when generating parameter updates in practical training scenarios. Crucially, the response on a \emph{disjoint out-of-sample batch} $\mathcal{B}_{\mathrm{o}} \neq \mathcal{B}_{\mathrm{g}}$ is measured, \textit{i.e.}\ the loss reduction an update step achieves on unseen data, instead of the gradient batch that was used to form the update. This protocol is thus denoted as \emph{out-of-sample} spectral probing. Note that an in-sample measurement used by curvature analyses in prior works reports a far more permissive spectral tail that largely reflects refitting the gradient batch rather than generalisation on unseen data (see \cref{fig:insample_vs_heldout} in \cref{app:insample_probing}). The SVD of the updated momentum buffer is $\mathbf{M}_j = \sum_{d} \sigma_{j,d}\, \bm u_{j,d} \bm v_{j,d}^\top$. The singular components are indexed by their \emph{rank} $d$, \textit{i.e.}\ the position of $\sigma_{j,d}$ in the descending spectrum, and grouped across weight matrices by rank. This grouping is well-defined for GPT-2-style architectures because every probed matrix shares the same minimum dimension, the model width $d_\text{model}$, so each matrix carries exactly $d_\text{model}$ singular components and the $d$-th component can be aggregated model-wide. Our analysis therefore focuses on intra-block spectral behaviour instead of inter-block heterogeneity. The \emph{rank-$d$ probe} $\Delta\bm\theta^{(d)}$ collects the $d$-th singular component $\mathbf{P}_{j,d}$ of every probed matrix to form a pseudo parameter update
\begin{equation}
\label{eqn:rank_probe}
    \mathbf{P}_{j,d} = -\,\kappa_j\,\bm u_{j,d} \bm v_{j,d}^\top \,\, ,\qquad \Delta\bm\theta^{(d)} = \operatorname{vec}\big(\{\mathbf{P}_{j,d}\}_{j=1}^{N_{\mathrm W}}\big) \quad (d = 1, \dots, d_\text{model}),
\end{equation}
where $\operatorname{vec}$ concatenates the entries of all $N_{\mathrm W}$ matrices into a single vector acting on the parameter state $\bm{\theta}_t$, and $\kappa_j$ is the fixed per-weight-matrix scaling of Scion for the $j$-th matrix. Note that $\mathbf{P}_{j,d}$ has the same orientation and scaling as the rank-$d$ component that the actual Muon (Scion) optimiser applies to the $j$-th momentum buffer, which makes the probe's loss-optimal step size directly comparable with the effective learning rate used in the Muon (Scion) optimiser's trajectory (the red dashed line in the middle panel of \cref{fig:lr_analysis}). Also, every matrix contributes exactly one rank-one component to each probe, which prevents the curvature measurements from being confounded by unequal numbers of aggregated components from different matrices.

\textbf{Measurement via Local-Quadratic Approximation.}
For each probe, we primarily measure the \emph{loss-optimal step size}, defined as the step size that minimises the \emph{out-of-sample} loss along the probe direction, $\eta^*_d = \arg\min_{\eta \ge 0} \mathcal{L}(\bm{\theta}_t + \eta\, \Delta\bm\theta^{(d)}; \mathcal{B}_{\mathrm{o}})$. Running an exact line search for all ranks of every checkpoint is prohibitively expensive. Therefore, we estimate $\eta^*_d$ from the local-quadratic approximation (LQA) of the loss along the probe direction,
\begin{equation}
\label{eqn:lqa}
    \mathcal{L}(\bm{\theta}_t + \eta\, \Delta\bm\theta; \mathcal{B}_{\mathrm{o}}) \;\approx\; \mathcal{L}(\bm{\theta}_t; \mathcal{B}_{\mathrm{o}}) \;+\; \eta\, \bm g^\top \Delta\bm\theta \;+\; \tfrac{1}{2}\, \eta^2\, \Delta\bm\theta^\top \mathbf{H}\, \Delta\bm\theta,
\end{equation}
where $\bm g$ and $\mathbf{H}$ denote the full-parameter gradient and Hessian of the out-of-sample batch loss $\mathcal{L}(\cdot\,;\mathcal{B}_{\mathrm{o}})$ at $\bm\theta_t$. Minimising \cref{eqn:lqa} over $\eta$ yields
\begin{equation}
\label{eqn:eta_d}
    \eta^*_d \;=\; -\,\frac{\bm g^\top \Delta\bm\theta^{(d)}}{\Delta\bm\theta^{(d)\top}\, \mathbf{H}\, \Delta\bm\theta^{(d)}}.
\end{equation}
The directional curvature in the denominator does not require an explicit Hessian, since rearranging \cref{eqn:lqa} at a small displacement $\varepsilon$ gives the finite-difference estimate
\begin{equation}
\label{eqn:fd_curv}
    \Delta\bm\theta^\top \mathbf{H}\, \Delta\bm\theta \;\approx\; \frac{2}{\varepsilon^2}\Big[\,\mathcal{L}(\bm{\theta}_t + \varepsilon\, \Delta\bm\theta; \mathcal{B}_{\mathrm{o}}) \;-\; \mathcal{L}(\bm{\theta}_t; \mathcal{B}_{\mathrm{o}}) \;-\; \varepsilon\, \bm g^\top \Delta\bm\theta\,\Big],
\end{equation}
which reuses the pre-computed loss and gradient and consumes only \emph{one additional forward pass} per probe. The resulting collection $\{\eta^*_d\}_{d=1}^{d_\text{model}}$ constitutes the \emph{out-of-sample loss-optimal spectral step profile} of the loss landscape (the \emph{measured profile} for short) at checkpoint $\bm\theta_t$, which \cref{fig:lr_analysis} tracks across training iterations. Interpreting this profile relies on local quadraticity and, for a positive interior optimum, on the probe being a descent direction on the held-out batch and having positive directional curvature. These assumptions are stated in \cref{app:spectral_probing_assumption}, which also validates the local quadraticity of the Transformer pretraining loss landscape and reports the filtering of probes that fail either the descent or curvature check. Such probes account for only $0.52\%$ of the sample and are excluded from the final spectral profile.

\begin{figure*}[ht]
    \centering
    \includegraphics[width=\textwidth]{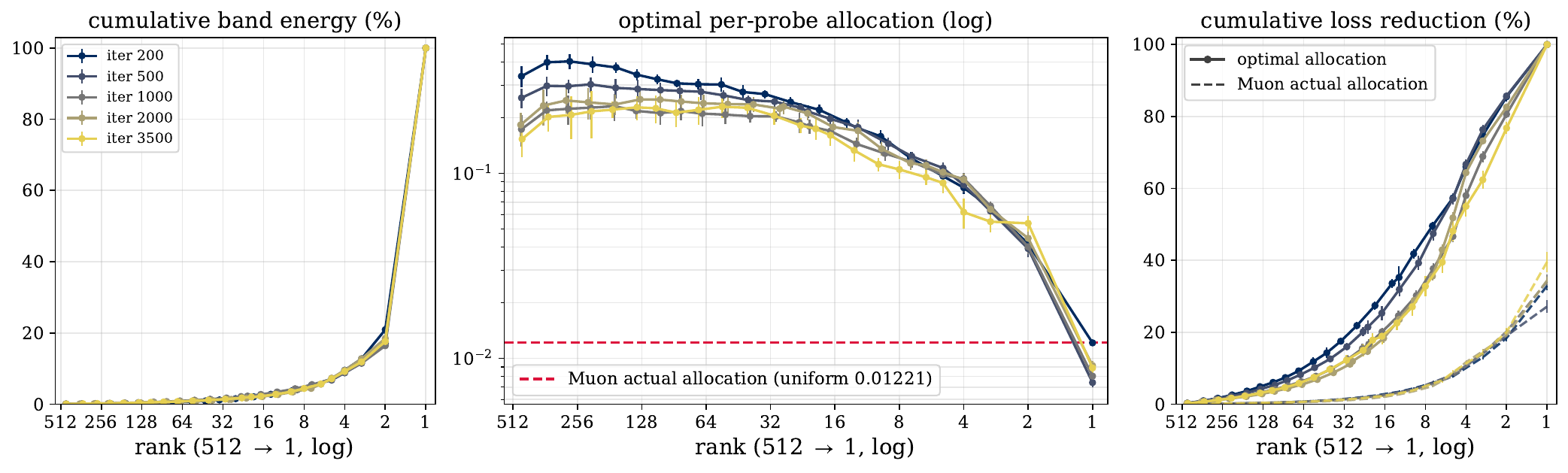}
    \caption{\textbf{Out-of-sample loss-optimal step profile across training.} Spectral probing (\cref{sec:linesearch_analysis}) of the updated \emph{momentum buffer}, along a Muon (Scion) trajectory of a 64M-parameter ``modded-nanogpt'' model (exact run in \citet{scion}), with checkpoints from iterations 200--3500 in the constant-learning-rate stage. Each point shows the mean over 20 held-out batches, and error bars indicate the standard deviation across batches. The horizontal axis runs from the deep tail (rank 512) to the head (rank 1) in log scale. \textbf{Left:} cumulative spectral energy, concentrated in the leading ranks (the head alone carries the majority). \textbf{Middle:} per-rank optimal spectral allocation $\eta^*_d$ in log scale (the LQA-estimated loss-optimal step size along each probe): this spectral profile is mostly \emph{flat across the bulk} and drops sharply near the volatile head; the head's optimal step coincides with the actual (uniform) allocation of Muon (red dashed line, $0.01221$), pinned at the Edge-of-Stability \citep{EoS}. \textbf{Right:} cumulative loss reduction (LQA estimate), accumulated for each probe from the deep tail to the head and normalised by the per-iteration optimal total, under two allocation policies: each rank at its own optimal allocation (solid, reaching $100\%$ by construction) versus every rank at the actual uniform allocation of Muon (dashed). The head-pinned allocation of Muon realises only a small fraction of this idealised per-iteration optimum.}
    \label{fig:lr_analysis}
\end{figure*}

\subsection{Observations: a stable, highly anisotropic loss-optimal step profile}

\cref{fig:lr_analysis} maps the spectral profile of the momentum buffer along the Muon (Scion) training trajectory of a 64M ``modded-nanogpt'' model \citep{moddednanogpt,scion}. Each data point is averaged over 20 held-out batches, with error bars indicating the standard deviation. We probe only 32 ranks at each checkpoint to reduce the probing cost and interpolate the resulting profile for visualisation. The foremost observation from this figure is that the measured profile as a whole is \emph{highly anisotropic yet stable}. The loss-optimal step spans nearly an order of magnitude and follows a characteristic shape that is mostly flat across the bulk and drops sharply near the head. After an initial burn-in phase of ${\sim}200$ iterations, the profile settles into this shape and persists across training, remaining nearly sample-invariant with narrow error bars across batches. This stability makes a \emph{static} structural prior viable in \cref{sec:method}, rather than an adaptively learned one. Note that this shape is not an artefact of the choice of optimiser or model scale, since measured profiles with the same characteristics also appear when probing a standard AdamW trajectory (see \cref{fig:adamw_probing} in \cref{app:adamw_probing}) and a larger model (see \cref{fig:w768_probing} in \cref{app:w768_probing}). In the remainder of this section, we refine this picture with three structural details and a further finding on its slow emergence during early training.

\textbf{The update signal concentrates in a Volatile Head operating at the Edge-of-Stability.} The leading ranks carry the dominant share of the buffer's spectral energy (the low-rank structure described in \citep{AniGrad1,SUMO}), yet these are the least stable directions in the spectrum: their loss-optimal steps are nearly an order of magnitude smaller than those of the bulk. Crucially, the actual Muon update scale coincides with this limit (red dashed line in the middle panel of \cref{fig:lr_analysis}). This provides direct empirical evidence that the head operates at the Edge-of-Stability \citep{EoS}, and that this single direction limits the maximum step size of all the spectral directions in standard Muon.

\textbf{The Tolerant Bulk is mostly flat and under-exploited.} Below the head, the loss-optimal step climbs sharply over the leading few ranks, with the largest single gap between the rank-1 and rank-2 probes, and then stays mostly \emph{flat} across the bulk at several times the scale of the head (\cref{fig:lr_analysis}), yet Muon drives all directions at the same head-pinned scale. This causes a gap across the bulk of the spectrum. By allowing the bulk directions to use their own much larger loss-optimal step sizes instead of Muon's uniform spectral allocation, a much larger per-iteration loss reduction can be achieved (the right panel of \cref{fig:lr_analysis}). Note that this cumulative loss reduction is a \emph{very} optimistic idealisation, assuming all probes to be Hessian-conjugate, so we interpret it qualitatively: the headroom is large, although its exact size is uncertain. This idealised headroom motivates bulk amplification, although cross-rank interactions determine how much can be practically recovered in a joint update.

\textbf{The near-head decline is log-rank-linear.} Replotting the same per-rank optimal allocation on a \emph{linear} scale against log rank (the left panel of \cref{fig:lgl_shape}) makes the shape of the near-head decline directly visible. It can be seen that the loss-optimal step increases approximately \emph{linearly with log rank} from the head to the bulk plateau, and stays mostly flat beyond. The measured profile can be described by two parameters, the plateau level and the log-rank slope of the decline (equivalently, its onset rank). This parametric description is stable across checkpoints and model scales in the same way as the profile itself. We emphasise that this is an empirical characterisation of the measurement, but it is directly applicable, and \cref{sec:method} applies it to the spectral allocation used by \samuon{}.

\begin{figure}[ht]
    \centering
    \includegraphics[width=0.47\textwidth]{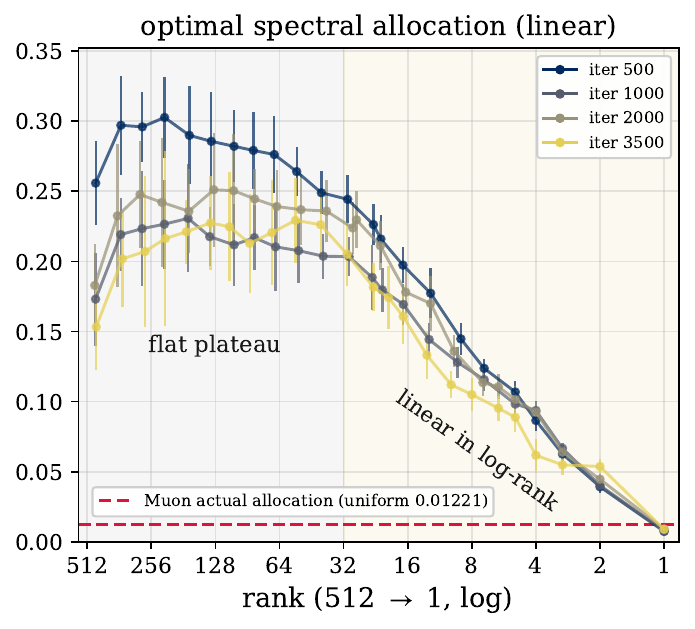}\hfill
    \includegraphics[width=0.47\textwidth]{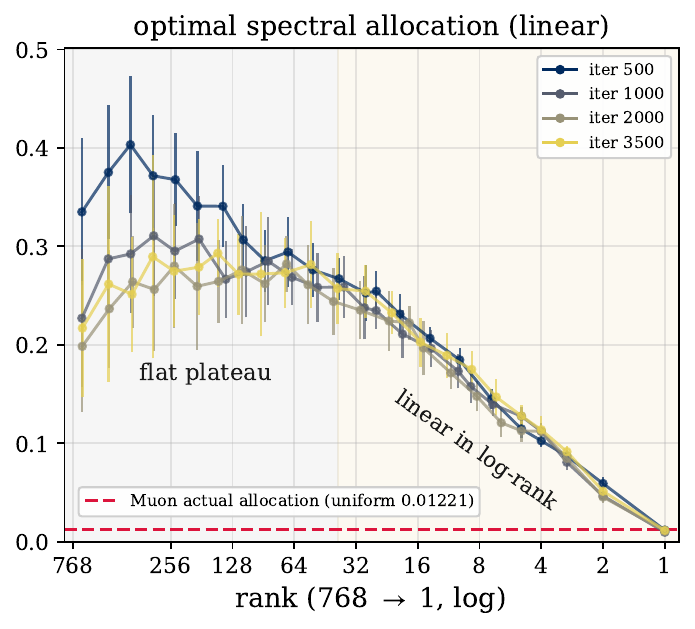}
    \caption{\textbf{The near-head decline of the loss-optimal step profile is close to linear in log rank, across model scales.} We replot the probed spectral profile with a \emph{linear} $y$-axis against log rank. \textbf{Left:} the width-512 (64M) model (the middle panel of \cref{fig:lr_analysis}); \textbf{Right:} the width-768 (124M) model (see \cref{app:w768_probing}). At both scales, the loss-optimal step decreases approximately linearly with log rank from the bulk plateau to the head. The visually estimated plateau onset shifts from rank $\sim32$ at width 512 to rank $\sim40$ at width 768, consistent with the square-root width heuristic used for \samuon{} in \cref{sec:method}.}
    \label{fig:lgl_shape}
\end{figure}

\textbf{The head gap emerges slowly over training.} The gap between the head and the rest of the spectrum is not present at initialisation. Rather, it \emph{develops} gradually over the first few hundred iterations. When we probe the same trajectory across training iterations (\cref{fig:warmup_motivation} in \cref{app:experimental_setup}), the head gap (rank-2/rank-1 optimal spectral allocation ratio, depicted in the left panel of \cref{fig:warmup_motivation}) starts from near-isotropic ($\approx 2$ at iteration 50; exact isotropy would give $1$) and slowly climbs to its final level ($\approx 5$; this measured head gap anchors the $\gamma$ search range of our experiments in \cref{sec:experiments}) within a few hundred steps, after which it remains stable. The head's spectral energy and directional curvature also concentrate over the same window at a similar pace. One consequence is that the gap that \samuon{} exploits must open before it can be exploited, which is precisely why the spectral shaping requires a \textbf{slow warmup} rather than being applied from the first step (\cref{sec:method}).

\subsection{Why Muon Outperforms Adam, and How to Improve Muon}
\label{sec:optimiser_implications}

The measured spectral profile supplies a missing convergence-side account for the success of orthogonal optimisers and quantifies the potential room for improvement.

\textbf{Spectral Re-Allocation Perspective.} Because our probing measures the loss-optimal step profile of the momentum buffer from which momentum-based optimisers generate their update steps, the comparison between optimisers reduces to a question of \emph{spectral allocation}. Momentum SGD applies the buffer directly, \textit{i.e.}\ its update $\mathbf{M}^{(t)} = \sum_d \sigma_d\, \bm u_d \bm v_d^\top$ assigns each rank a scale proportional to its singular value $\sigma_d$. Muon applies the whitened buffer $\mathrm{NS}(\mathbf{M}^{(t)}) = \sum_d \bm u_d \bm v_d^\top$, assigning a uniform scale to every singular direction. To visualise this comparison, \cref{fig:profile_cmp} overlays each optimiser's spectral allocation against the measured optimal profile. Viewing each method as a spectral allocation places SGD, Adam, and Muon on a single axis and yields a novel unified account of why Muon outperforms Adam, which in turn outperforms SGD, without relying on the separate, optimiser-specific arguments of prior work. The remainder of this section discusses the consequences.

\textbf{SGD Issues and Adam Workaround.}
The poor performance of SGD on Transformer training is well documented \citep{adamtransformer}, and our probing analysis makes its failure mode self-evident, since its signal-proportional allocation runs directly counter to the measured loss-optimal step profile. Worse still, because the Transformer gradient energy concentrates in the leading ranks \citep{AniGrad1,SUMO}, SGD allocates almost its entire update along the most unstable directions. Adam provides a workaround through coordinate-wise scaling, which implicitly dampens this allocation and explains its advantage over SGD in Transformer training. This provides a new, within-block account of the Adam advantage. While \citet{adamtransformer} attributes the Adam--SGD gap to curvature heterogeneity \emph{across} parameter blocks, our probing identifies the anisotropic spectral curvature \emph{within} parameter blocks as a complementary within-block mechanism that concurrent work by \citet{MuonCurvature} independently identifies as a driving factor. Note that the Adam workaround is spectral-agnostic, since it softens the head's weight but has no mechanism to overcome its dominance of the update spectrum. The measured allocation confirms this point: projected onto the buffer's singular basis, the true Adam update lifts the bulk's relative scale by an order of magnitude over SGD, but the head still dominates (\cref{fig:profile_cmp}).

\textbf{Advantage and Limit of Muon's Uniform Scaling.} 
Muon applies a far more direct spectral correction. By assigning a uniform scale across the spectrum, it amplifies rank $d$ by $\sigma_1/\sigma_d$ relative to the SGD spectral allocation, which greatly improves utilisation of the bulk (\cref{fig:profile_cmp}). Since the measured loss-optimal step profile is \emph{anti-correlated} with $\sigma_d$, this reallocation pushes in precisely the direction the landscape demands. The whitening greatly increases the relative update scale of the tolerant bulk, where the landscape allows far larger steps (\cref{fig:lr_analysis}). We argue that this spectral reallocation, rather than the spectral-norm stability emphasised by existing analyses, is the geometric source of the rapid convergence of Muon, directly addressing the open question raised in \cref{sec:intro}. The measured loss-optimal step is mostly flat away from the head and several times the head's Edge-of-Stability step size, whereas Muon, pinned by that single head direction, drives every rank at the head's conservative scale (\cref{fig:lr_analysis}). The allocation of Muon is therefore approximately correct in shape across the bulk but is also systematically too small in scale because of the limiting volatile head.

\textbf{Towards Structural Alignment (SAMuon).} The remaining question is how to make use of this headroom. Eigen-adaptive methods such as SOAP \citep{SOAP} can be viewed as learning this anisotropic profile through their adaptive second moments, but at a high computational and memory cost due to explicit eigendecompositions, large adaptive optimiser states, and greatly increased algorithm complexity. Our analysis shows that this expensive adaptive machinery could be avoided. The measured profile is structurally stable across batches and training stages. This means it can be distilled into a \emph{static} prior at near-zero additional cost that holds the single volatile head at the step size it already takes, and raises the remainder of the spectrum towards the tolerance the probe measures, either uniformly or following the measured log-rank-linear shape (the \samuonlite{} and \samuon{} curves in \cref{fig:profile_cmp}). \cref{sec:method} implements this idea in SAMuon and SAMuon-lite, using low-cost, transient low-rank estimates of the head region.

\begin{figure*}[t]
    \centering
    \includegraphics[width=\textwidth]{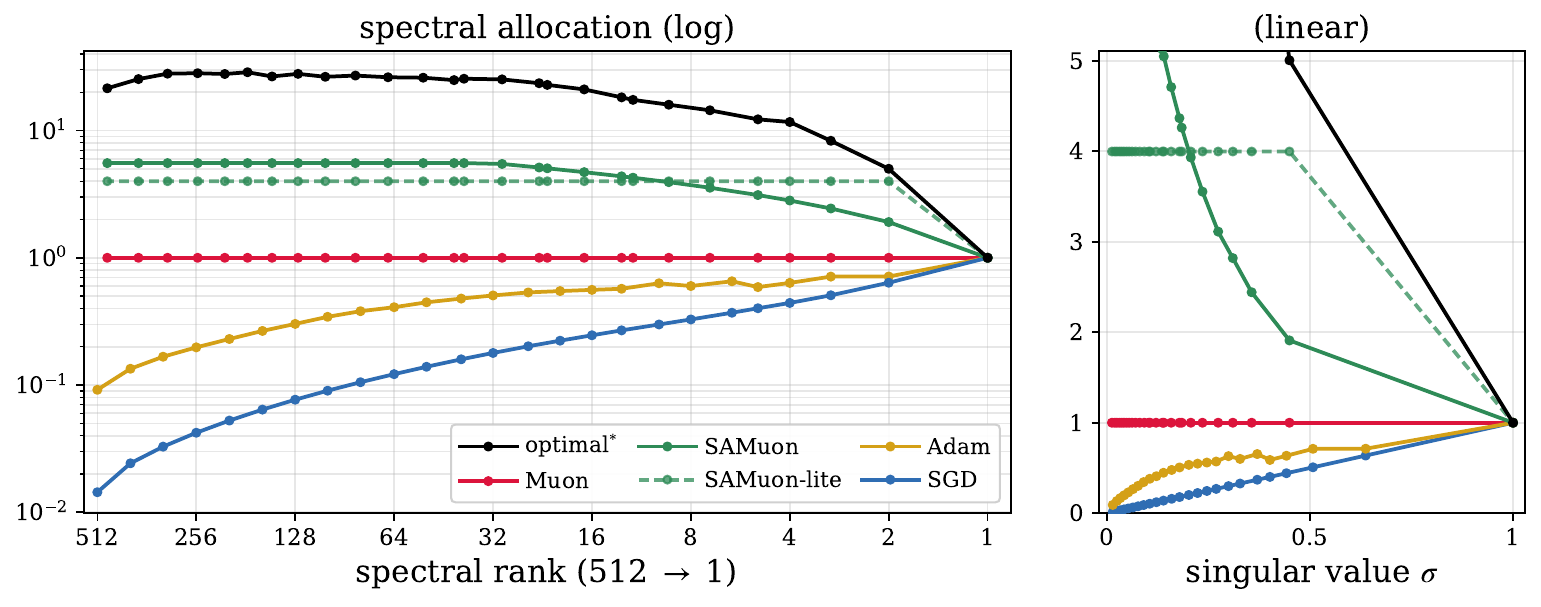}
    \caption{\textbf{Optimisers differ in how they scale the momentum-buffer spectrum, and \samuon{} / \samuonlite{} match the measured optimum most closely.} The updates generated by different optimisers implicitly assign a per-rank scaling to the singular components of the \emph{momentum buffer}, and we overlay these allocations against the \emph{optimal$^{*}$} profile (the iteration-1000 out-of-sample loss-optimal step size profile of Muon from \cref{fig:lr_analysis}). All profiles are head-normalised to $1$ and evaluated at the iteration-1000 checkpoint, with the Adam allocation drawn from the AdamW trajectory (see \cref{app:profile_cmp_details}). \textbf{Left:} scaling per spectral rank (log axes). \textbf{Right:} the same allocations as a function of normalised singular value $\sigma$ (linear axes). \samuonlite{} ($\gamma = 5$) raises the bulk to a single uniform level well above the Muon scale while holding the head at the same scale as Muon, and \samuon{} ($\gamma = 7.07$) additionally follows the measured log-rank-linear decline over the leading ranks, the closest shape match to the measured optimum profile. From SGD $\to$ Adam $\to$ Muon $\to$ \samuonlite{} $\to$ \samuon{}, each successive optimiser aligns its allocation more closely with what the spectral profile suggests.}
    \label{fig:profile_cmp}
\end{figure*}

\section{Spectral-Aware Muon (SAMuon)}
\label{sec:method}

Our spectral probing shows that the out-of-sample optimal step is mostly \emph{flat across the bulk} of the spectrum, far above the much smaller step the volatile head tolerates (\cref{fig:lr_analysis}). The head sits at the Edge-of-Stability and limits the global step size: the mostly flat bulk could absorb a uniformly larger update, but Muon, by whitening every direction to the same scale, can only use the head's conservative step for the whole spectrum. To address this issue, we propose \textbf{Spectral-Aware Muon (SAMuon)}. As detailed in \cref{alg:samuon}, \samuon{} retains the momentum and whitening mechanisms of Muon but applies a \emph{head-anchored} spectral allocation to the whitened update: it \emph{holds the head at the same scale as Muon} while increasing the rest of the spectrum towards its measured tolerance, realised through a low-cost, transient low-rank estimate of the head region that adds no persistent optimiser state. We instantiate this allocation into two variants: \samuon{}, which follows the measured log-rank-linear shape of \cref{fig:lgl_shape}; and \samuonlite{}, which is a two-level simplification that runs at nearly the same wall-clock speed as Muon.

\subsection{Head-anchored spectral allocation: from two levels to the measured profile}
\label{subsec:spectral_scaling}

\textbf{Momentum Integration.}
Both instantiations maintain a single momentum buffer per 2D weight matrix, updated exactly as in the Muon baseline (\cref{eqn:muon_momentum}). The spectral modification below changes only how the buffer is orthogonalised, leaving its temporal dynamics untouched.

\textbf{Head-anchored scaling.}
\samuon{} and \samuonlite{} distil the measured profile of \cref{sec:linesearch_analysis} (mostly flat across the bulk, log-rank-linear near the head, see \cref{fig:lgl_shape}) into a static allocation that anchors the head and rescales the rest of the spectrum. Let $\mathrm{NS}(\mathbf{M}^{(t)}) \approx \sum_i \bm u_i \bm v_i^\top$ be the whitened update used in Muon; under exact whitening, it drives every singular direction at unit scale. Let $(\bm u_i, \bm v_i)_{i \le k}$ be the leading $k$ singular pairs of the buffer $\mathbf{M}^{(t)}$. The SAMuon update to each weight matrix $\mathbf{O}_\text{SA}^{(t)}$ raises the whole whitened momentum buffer to scale $\gamma_t$ and then restores the leading $k$ directions to scales $s_i(t)$:
\begin{equation}
\label{eqn:samuon_update}
    \mathbf{O}_\text{SA}^{(t)} \;=\; \gamma_t\, \mathrm{NS}\!\big(\mathbf{M}^{(t)}\big) \;-\; \sum_{i=1}^{k} \big(\gamma_t - s_i(t)\big)\, \bm u_i \bm v_i^\top,
\end{equation}
Under exact whitening, this allocates rank $i \le k$ the coefficient $s_i(t)$ and every other direction the coefficient $\gamma_t$. The target profile anchors the head at the same unit scale as Muon, $s_1(t) = 1$, so it remains at the Edge-of-Stability step size that whitening already respects. With the finite $4 \times 3$ Newton--Schulz implementation, these target allocations are approximate; its response is quantified in \cref{app:ns_parameters}. The scales rise monotonically to the level of the tail, $\gamma_t$, with $1 = s_1(t) \le s_i(t) \le \gamma_t$. Both variants start from the uniform profile of Muon and warm up gradually to the target profile. A warmup weight $w_t \in [0,1]$, rising from $0$ to $1$ over the warmup horizon (cosine schedule, see \cref{app:experimental_setup}), sets $\gamma_t = 1 + (\gamma - 1)\,w_t$ and $s_i(t) = 1 + (s_i - 1)\,w_t$ for target scales $s_i$. Training begins at exactly Muon ($w_t = 0$ collapses \cref{eqn:samuon_update} to $\mathrm{NS}(\mathbf{M}^{(t)})$), and the target anisotropic profile takes effect only after the head-bulk gap in the spectral profile has emerged (\cref{app:experimental_setup}). Setting $\gamma = 1$ recovers the standard Muon update exactly for SAMuon / SAMuon-lite, making the Muon baseline a strict ablation.

\textbf{\samuon{}: the measured log-rank-linear profile.}
SAMuon follows the empirical shape of \cref{fig:lgl_shape} directly so that the target scales increase linearly in log rank from the head to the level of the tail,
\begin{equation}
\label{eqn:lgl_profile}
    s_i \;=\; 1 + (\gamma - 1)\,\frac{\log i}{\log k}, \quad 1 \le i \le k \;\; (k \ge 2),
\end{equation}
and all ranks beyond $k$ receive the fixed tail scale $\gamma$. The value of $k$ is set by the measured onset of the plateau, $k = 32$ on the width-512 probing model, and transfers across widths by a square-root rule inspired by \cref{fig:lgl_shape}, \textit{i.e.}\ $k(d_\text{model}) = \left\lfloor 32\sqrt{d_\text{model}/512}\,\right\rfloor$. The leading $k$ singular pairs are obtained from a randomised low-rank SVD of the buffer at each update step (\texttt{torch.svd\_lowrank}; \citealp{lowrankSVD}). We emphasise that this shape and scaling rule are an \emph{empirical} distillation of the measurement and not a theoretically derived optimum. We defer the theoretical investigation of this measurement to future work. Under this rule, \samuon{} attains the best or joint-best final loss at every batch size and model scale (\cref{sec:main_results}).

\textbf{\samuonlite{}: a two-level simplification at Muon speed.}
At $k = 1$, \samuonlite{} does not need a ramp (\cref{eqn:lgl_profile} is not invoked), and only the anchor $s_1 = 1$ remains with a uniformly boosted remainder. This deliberately coarse two-level allocation captures the profile's dominant feature of a single volatile head with a far more tolerant bulk. The single leading pair $(\bm u_1, \bm v_1)$ is obtained from a few steps of power iteration on $\mathbf{M}^{(t)}$ (reusing the buffer, with no explicit SVD) at $O(mn)$ cost, a handful of matrix-vector products, so \samuonlite{} runs at nearly the same wall-clock speed as Muon. For both SAMuon variants, the weights are then updated as $\mathbf{W}^{(t+1)} = \mathbf{W}^{(t)} - \eta\,\kappa\,\mathbf{O}_\text{SA}^{(t)}$.

\begin{algorithm}[ht]
   \caption{SAMuon / SAMuon-lite, shown for a single 2D weight matrix}
   \label{alg:samuon}
\begin{algorithmic}[1]
   \small
   \State {\bfseries Input:} initial weights $\mathbf{W}^{(1)}$, momentum buffer $\mathbf{M}^{(0)} = \mathbf{0}$, Scion matrix scaling $\kappa=\sqrt{d_{\mathrm{out}}/d_{\mathrm{in}}}$
   \State {\bfseries Hyperparameters:} $\eta$ (effective LR), $\mu$ (momentum), $\gamma$ (tail boost), $T$ (total steps); profile rank $k$ (width rule, \cref{app:experimental_setup}) and target scales $\{s_i\}_{i=1}^{k}$ (\cref{eqn:lgl_profile}) for \samuon{} only
   \For{$t = 1, \dots, T$}
      \State Compute gradient $\mathbf{G}^{(t)}$ on batch $\mathcal{B}_t$
      \State $\mathbf{M}^{(t)} \leftarrow \mu\, \mathbf{M}^{(t-1)} + (1-\mu)\, \mathbf{G}^{(t)}$
      \State $\gamma_t \leftarrow 1 + (\gamma - 1)\, w_t$ \Comment{cosine-scheduled spectral warmup}
      \State \textbf{Variant \#1: \samuon{}}  (log-rank-linear profile):
      \State \hspace{1em} $(\bm{u}_i, \bm{v}_i)_{i \le k} \leftarrow \mathrm{LowRankSVD}(\mathbf{M}^{(t)},\, k)$ \Comment{\texttt{torch.svd\_lowrank()}}
      \State \hspace{1em} $s_i(t) \leftarrow 1 + (s_i - 1)\, w_t$, for $i = 1, \dots, k$ \Comment{head region spectral warmup}
      \State \hspace{1em} $\mathbf{O}_\text{SA}^{(t)} \leftarrow \gamma_t\, \mathrm{NS}(\mathbf{M}^{(t)}) - \sum_{i=1}^{k} (\gamma_t - s_i(t))\, \bm{u}_i \bm{v}_i^\top$ \Comment{\cref{eqn:samuon_update}}
      \State \textbf{Variant \#2: \samuonlite{}} ($k = 1$, simple two-level profile):
      \State \hspace{1em} $(\bm{u}_1, \bm{v}_1) \leftarrow \mathrm{PowerIter}(\mathbf{M}^{(t)})$ 
      \State \hspace{1em} $\mathbf{O}_\text{SA}^{(t)} \leftarrow \gamma_t\, \mathrm{NS}(\mathbf{M}^{(t)}) - (\gamma_t - 1)\, \bm{u}_1 \bm{v}_1^\top$ \Comment{\cref{eqn:samuon_update} with $k=1$}
      \State $\mathbf{W}^{(t+1)} \leftarrow \mathbf{W}^{(t)} - \eta\,\kappa\, \mathbf{O}_\text{SA}^{(t)}$ 
   \EndFor
\end{algorithmic}
\end{algorithm}

\subsection{Algorithm analysis: complexity and convergence guarantee}
\label{subsec:complexity}

\samuon{} preserves the core efficiency of standard orthogonal optimisers. Memory-wise, both instantiations maintain exactly one dense momentum buffer per weight matrix, identical to Muon ($mn$) and strictly smaller than Adam ($2mn$) and SOAP ($6mn$). The head estimate adds only a \emph{transient} rank-$k$ factor pair ($k(m+n) \ll mn$), not persistent state, so SAMuon remains the most memory-frugal orthogonal optimiser of the recent Muon variants. In FLOPs, the per-step overhead over Muon is negligible for both: $O(mn)$ for the power iteration of \samuonlite{} and $O(mnk)$ for the rank-$k$ randomised SVD of \samuon{}, with $k$ on the order of $\sqrt{d_\text{model}} \ll r$, both small beside the $O(mnr)$ Newton-Schulz whitening. Wall-clock behaviour, however, currently separates the two. Measured per training iteration on the 1B model (\cref{app:wallclock}), the power iteration of \samuonlite{} adds $0.5\%$ to the Muon iteration time ($5.3\%$ of the NS whitening), so it runs at nearly the same speed as Muon, whereas the \texttt{torch.svd\_lowrank} call of \samuon{} adds $7.4\%$, nearly as much as the Newton-Schulz whitening itself. It is left for future work to improve the current \texttt{svd\_lowrank} implementation, which could greatly reduce its wall-clock cost given the low theoretical FLOP count.

\textbf{Theoretical Convergence.} We provide a convergence guarantee for the idealised exact-whitening versions of both variants (assuming perfect Newton-Schulz whitening) in \cref{app:convergence_samuon} (\cref{thm:samuon_convergence}). For any head-anchored allocation with scales in $[1, \gamma]$, the guarantee preserves the convergence orders of Muon up to $\gamma$-dependent constants: $\mathcal{O}(T^{-1/4})$ in the stochastic setting and $\mathcal{O}(T^{-1/2})$ in the deterministic setting \citep{MuonConvergence}, covering both proposed algorithms.

\section{Experiments}
\label{sec:experiments}

We evaluate the SAMuon and SAMuon-lite instantiations on Large Language Model (LLM) pretraining, following the rigorous experimental protocol established in \citet{scion} for training models with norm-constrained updates. Both instantiations are implemented directly on top of the codebase for Scion \citep{scion}.

\subsection{Experimental setup}
\label{sec:experimental_setup}

\textbf{Architecture and Training Protocol.} 
Our experimental setup follows the exact configuration in \citet{scion} to ensure a rigorous and fair comparison with the baselines. We train ``modded-nanogpt'' models \citep{moddednanogpt,scion} at three scales: 124M (width 768), 300M (width 1280), and 1B (width 2560) parameters. The models deviate from the standard GPT-2 by adopting modern architectural improvements: replacing LayerNorm with RMSNorm; utilising Rotary Positional Embeddings (RoPE) \citep{RoPE} in place of absolute embeddings; and removing bias terms to maximise arithmetic intensity.
All runs train on the FineWeb dataset \citep{fineweb} (100B-token sample). The 124M and 300M models use a 10B-token budget and the 1B model a 20B-token budget, with the iteration count matched to the budget for each batch size. The batch size is swept over 1024, 2048, and 4096 sequences. For the 124M model the budget is roughly $4\times$ the Chinchilla-optimal allocation \citep{chinchilla} (${\sim}1.7\times$ at 300M and ${\sim}1\times$ at 1B), so the 124M runs train well into the over-trained regime. We employ a \textit{trapezoidal} learning rate schedule consisting of a constant phase followed by a linear decay over the final 28.5\% of training tokens (tuned schedule for Muon by \citet{scion}). A 5\% linear warmup phase is included for the AdamW baseline, and a 30\% spectral warmup phase (for $w_t$ in \cref{alg:samuon}) is applied to the SAMuon variants at the beginning of training. Each optimiser configuration is evaluated with a single seed. Within each model-scale and batch-size setting, all optimiser runs use the same model initialisation and identical data ordering. Finally, all runs execute on NVIDIA A100 (80GB) GPUs in bfloat16 mixed precision.

\textbf{Baselines.} 
We compare against two primary optimisers: \textbf{AdamW} \citep{AdamW}, the standard adaptive baseline ($\beta_1=0.9, \beta_2=0.95$) used in most LLM training setups; and \textbf{Muon} (the Unconstrained \textbf{Scion} implementation by \citet{scion}). We chose Scion over the original Muon implementation due to its simpler hyperparameter tuning and better performance. A detailed discussion is provided in \cref{app:scion_over_muon}. Since \samuon{} is implemented directly on the Scion codebase, Scion serves as the strict ablation baseline (equivalent to either instantiation with $\gamma=1$).

\textbf{Hyperparameter Tuning Protocol.}
All hyperparameters are tuned per batch size on the 124M model, and transferred across model scales following the convention of \citet{scion}. For AdamW we sweep the learning rate on a half-power-of-two logarithmic grid. For Muon we sweep the \emph{spectral radius} (\texttt{radius\_2d}), the learning-rate factor Scion applies to the 2D, Muon-effective modules, which plays for orthogonal updates the role that the learning rate plays for AdamW, over a $\sqrt{2}$-spaced grid $\{35.36, 50, 70.71, 100, 141.42\}$. The tuned Muon optimum is $70.71/50/70.71$ at batch sizes $1024/2048/4096$. Both \samuon{} variants run at the fixed spectral radius $50$, the Scion default, and they sweep the tail boost $\gamma$ over $\{3.54, 5, 7.07, 10, 14.14\}$. The value of $k$ for \samuon{} follows the fixed width rule ($k = 39/50/71$ at widths $768/1280/2560$; \cref{app:experimental_setup}) and is not tuned. Finally, the hyperparameters are transferred directly across larger model scales, following the transfer rule found in \citet{scion} which uses a constant optimal radius for Scion and a halved learning rate per model-scale step for AdamW. $\gamma$ is assumed to be constant across model scales and is transferred without retuning, which makes the \samuon{} variants relatively under-tuned compared to the baselines (see discussion in \cref{sec:discussion}). Refer to \cref{app:experimental_setup} for the full hyperparameter sweeps and other experimental details.

\subsection{Main results}
\label{sec:main_results}
\cref{tab:full-scaling-grid} reports the final validation loss and the token-efficiency improvement over the model-scale (124M/300M/1B) $\times$ batch-size (1024/2048/4096) grid. Analysis of the results is provided below.

\textbf{\samuon{} variants outperform the Muon (Scion) baseline by a large margin across model scales and batch sizes.} Both variants attain a lower final validation loss than the tuned Muon baseline in all trained cells. \samuon{} improves on that baseline by $0.0187$--$0.0295$ at 124M, $0.0196$--$0.0228$ at 300M, and $0.0137$--$0.0191$ at 1B, which translates to a token-efficiency improvement of $13.3\%$--$24.0\%$ across the grid. For the simpler \samuonlite{} variant, the corresponding improvements are $0.0184$--$0.0275$, $0.0173$--$0.0203$, and $0.0134$--$0.0173$, which translate to a token-efficiency improvement of $13.3\%$--$22.1\%$. This demonstrates that both \samuon{} variants make effective use of the headroom identified in the spectral profile and achieve better convergence across models trained at the Chinchilla-optimal token budget or beyond. 

\textbf{\samuon{} variants' improvement over Muon grows with batch size.}
A thorough batch-size sweep is provided for the 124M model in the first three rows of \cref{tab:full-scaling-grid}, and a partial batch-size sweep is provided for larger models in the rest of the same table. A consistent trend can be observed across model scales: the token-efficiency improvements of \samuon{} and \samuonlite{} over Muon generally grow with batch size. They increase from $20.3\%$ and $20.3\%$ (1024) to $24.0\%$ and $22.1\%$ (4096) on 124M, from $17.7\%$ and $15.9\%$ (1024) to $18.8\%$ and $17.1\%$ (2048) on 300M, and from $13.3\%$ and $13.3\%$ (1024) to $15.6\%$ and $14.4\%$ (4096) on 1B. One possible explanation is that larger batches provide a less noisy estimate of the spectral structure in the momentum buffer.

\textbf{\samuon{} matches or outperforms \samuonlite{}, and the gap generally widens with batch size.} Across the model-scale and batch-size grid, \samuon{} attains a final validation loss at least as low as \samuonlite{} in all cells in \cref{tab:full-scaling-grid}, with a difference of $0.0003$--$0.0027$ at 124M, $0.0023$--$0.0025$ at 300M, and $0.0003$--$0.0018$ at 1B, which translates to a small token-efficiency advantage of $0.0$--$2.3$ (percentage) points. At batch size 1024 on the 124M and 1B models the two variants are effectively tied, with a loss difference of $0.0003$ and an equal token-efficiency improvement. Elsewhere the margin favours \samuon{}, which suggests that the more precise log-rank-linear allocation provides an edge in optimisation by following a profile closer to the observation in our spectral probing analysis. Additionally, it can be observed that the gap between \samuon{} and \samuonlite{} generally widens with batch size, which grows from $0.0$ (1024) to $1.9$ (percentage) points (4096) on 124M, from $0.0$ (1024) to $1.2$ points (4096) on 1B, and holds level on 300M at $1.8$ (1024) and $1.7$ points (2048). One possible explanation is that larger batches provide a less noisy estimate of the spectral structure in the momentum buffer, allowing \samuon{} to benefit more from its finer-grained allocation. Note that the simpler two-level allocation of \samuonlite{} remains a cost-effective alternative that recovers most of \samuon{}'s token-efficiency improvement over Muon, while offering a simpler implementation and lower per-iteration wall-clock cost.

\begin{table}[t]
\caption{\textbf{Final validation loss and token-efficiency improvement across the model-scale $\times$ batch-size grid.} The hyperparameters for each optimiser are tuned for every batch size setup on the 124M model. The larger model scales (300M and 1B) inherit the tuned hyperparameters from the corresponding batch size of the 124M model, following the scaling convention of \citet{scion} (see \cref{sec:experimental_setup}). The final loss is reported at the end of the training budget (10B tokens for 124M and 300M, 20B tokens for 1B). The token-efficiency improvement (improv.) is the token budget saving $1 - T^{\star}/N$ of a \samuon{} variant over the Muon baseline, where $N$ is the schedule length of the tuned Muon baseline and $T^{\star}$ the shortest schedule with which the variant still reaches the same final loss (\cref{app:speedup}). The best (unrounded) loss and token-efficiency improvement in each row are highlighted in bold.}

\vspace{4pt}
\centering
\setlength{\tabcolsep}{5pt}
\setlength{\aboverulesep}{0.35ex}
\setlength{\belowrulesep}{0.5ex}
\begin{tabular}{llc rr rr rr}
\toprule
\multirow{2}{*}{Model} & \multirow{2}{*}{Batch} & \multirow{2}{*}{Tokens} & \multirow{2}{*}{AdamW} & \multirow{2}{*}{Muon} & \multicolumn{2}{c}{\samuon{}} & \multicolumn{2}{c}{\samuonlite{}} \\[-0.15ex]
\cmidrule(lr){6-7}\cmidrule(lr){8-9}
 & & & & & \multicolumn{1}{c}{loss} & \multicolumn{1}{c}{improv.} & \multicolumn{1}{c}{loss} & \multicolumn{1}{c}{improv.} \\
\midrule
124M & 1024 & 10B & 3.2105 & 3.1622 & \textbf{3.1435} & \textbf{20.3\%} & 3.1438 & 20.3\% \\
 & 2048 & 10B & 3.2410 & 3.1726 & \textbf{3.1484} & \textbf{23.8\%} & 3.1511 & 21.5\% \\
 & 4096 & 10B & 3.3050 & 3.1953 & \textbf{3.1658} & \textbf{24.0\%} & 3.1678 & 22.1\% \\
\midrule
300M & 1024 & 10B & 3.0297 & 2.9836 & \textbf{2.9640} & \textbf{17.7\%} & 2.9663 & 15.9\% \\
 & 2048 & 10B & 3.0658 & 2.9941 & \textbf{2.9713} & \textbf{18.8\%} & 2.9738 & 17.1\% \\
\midrule
1B & 1024 & 20B & 2.7599 & 2.7251 & \textbf{2.7114} & \textbf{13.3\%} & 2.7117 & 13.3\% \\
 & 4096 & 20B & 2.8067 & 2.7413 & \textbf{2.7222} & \textbf{15.6\%} & 2.7240 & 14.4\% \\
\bottomrule
\end{tabular}

\label{tab:full-scaling-grid}
\end{table}

\section{Discussion}
\label{sec:discussion}
\textbf{The offline probing analysis for new optimiser design.} The proposed out-of-sample spectral probing based on saved model checkpoints turns optimiser design into a problem that can be efficiently measured. The SAMuon variants show that a \emph{static}, head-anchored prior derived from the measurement can provide large improvements on Muon at near-zero additional FLOPs. We believe this shows a promising direction for future work on optimiser design, where researchers can probe the loss landscape with saved checkpoints efficiently. This process can generate rich design signal (\textit{e.g.}\ to discover the under-exploited spectral capacity in Muon in this paper) and save the cost of thorough full-training sweeps in early design stages.

\textbf{SAMuon variants are currently under-tuned relative to the Muon (Scion) baseline.} 
The SAMuon variants reported in \cref{sec:main_results} are under-tuned relative to their Muon baselines in two ways, which might understate their performance. Firstly, the SAMuon variants follow the same schedule as the Muon baseline in \citet{scion}, which is optimised for Muon. Our preliminary investigation has shown that the SAMuon variants can benefit from a much longer linear decay phase. Properly tuning the learning rate schedule for the SAMuon variants specifically would likely improve their performance. Secondly, the hyperparameters for the SAMuon variants are transferred directly from the tuned hyperparameters for the 124M model to larger model-scale setups without re-tuning, while the hyperparameters for Muon and AdamW are transferred across model scales by following the scaling rules tuned and discovered in \citet{scion}. This difference makes the hyperparameters for the SAMuon variants relatively under-tuned for larger model-scale setups. We leave a more thorough hyperparameter/schedule tuning for the SAMuon variants to future work.

\textbf{Is there a better spectral profile?} \samuonlite{} adopts the simplest two-level profile, which achieves a $13.3\%$--$22.1\%$ token-efficiency improvement in training. Meanwhile, the richer log-rank-linear profile adopted in \samuon{} provides a small yet consistent improvement on top of \samuonlite{} of up to $2.3$ (percentage) points across all training setups. This is evidence that following the measured optimal spectral profile is beneficial, though the margin is small relative to the gap over Muon. The natural next step is to ask whether there is a better spectral profile than the measured one. First and foremost, the current loss-optimal step profile does not consider cross-probe interactions. A more refined scaling profile/analysis that takes cross-probe curvature interactions into account could yield a much better spectral profile. Another possible direction is to investigate whether the profile should adapt to its drift over the training trajectory, beyond the current early warmup. A better-scheduled or dynamic allocation mechanism that addresses this long-term drift could be beneficial. Finally, the current profile only considers intra-block spectral allocation, where the spectral profile is averaged over all probed layers/modules in the model. A preliminary investigation of per-module gradient alignment in \cref{app:sign_alignment} shows clear inter-block differences. Thus, a more refined profile that assigns block-dependent allocations could be beneficial. We leave these questions for future work. 

\section{Conclusion}
\label{sec:conclusion}
In this work, we contributed an \emph{out-of-sample} spectral analysis of the Transformer pretraining loss landscape in order to understand why whitening the momentum buffer accelerates convergence so effectively. Probing the spectrum of the momentum buffer along real training trajectories, we mapped the optimal per-rank step size that the loss landscape tolerates along each singular direction and found it to have a stable, characteristic shape. This shape is mostly flat across the bulk of the spectrum, declining sharply near a single volatile head which sits at the Edge-of-Stability and allows a far smaller step size than the rest, with the decline tracing a close-to-linear ramp in log rank over the leading ranks. Viewed as a spectral allocation, this measured profile gives a unified account of why Muon outperforms Adam, which in turn outperforms SGD. It shows the limit of uniform whitening, which is constrained by the volatile head and under-exploits the more tolerant bulk of the spectrum.

Guided by this measured profile, we proposed Spectral-Aware Muon (\samuon{}), which employs \emph{static}, head-anchored priors on the whitened update that hold the volatile head at the same scale as Muon and increase the remainder of the spectrum towards its measured tolerance. \samuon{} implements the empirically measured log-rank-linear shape using a rank-$k$ randomised SVD of the momentum buffer, while the simplified \samuonlite{} keeps a coarse two-level allocation with power iteration and runs at nearly the same wall-clock speed as Muon. Neither adds persistent optimiser state over Muon, and a single convergence guarantee covers the idealised exact-whitening versions of both variants. Under the shared tuning-and-transfer protocol, with token budgets from approximately one to four times Chinchilla-optimal, \samuon{} reaches the same final loss as the tuned Muon (Scion) baseline with a $13.3\%$--$24.0\%$ token-efficiency improvement in all trained model and batch size combinations. \samuonlite{} matches or nearly matches that improvement, and the improvement generally grows with batch size.
These gains are a validation of the analysis. Aligning an optimiser's update with the measured loss-optimal step profile translates directly into better convergence. Furthermore, \samuon{} matches or improves on the two-level \samuonlite{} in every setting, with the clearest margins at larger batch sizes. This suggests that the complete \emph{spectral profile}, not merely the head/bulk split, carries a meaningful signal. We hope the profile and the probing methodology behind it serve as a reusable starting point for spectral-allocation optimiser design.

\bibliography{references}
\bibliographystyle{plainnat}

\newpage
\appendix
% cleveref: tag appendix sectioning so \cref says ``Appendix'' (applies only to labels defined below, so main-text \cref{sec:...} stays ``Section'')
\crefalias{section}{appendix}
\crefalias{subsection}{appendix}

\section{Spectral Probing: Assumptions and Validation}
\label{app:spectral_probing_assumption}
This section states the assumptions and retention conditions required to interpret the local-quadratic estimate in \cref{subsec:probing_method}, reports the filtering of probes that fail these conditions, and validates the local-quadratic approximation against an exact line-search reference.

\begin{assumption}[Local Quadraticity]
\label{assum:local_quadratic}
Along each probe direction $\{\bm\theta_t + \eta\,\Delta\bm\theta^{(d)} : \eta \ge 0\}$, the batch loss is well-approximated by its second-order Taylor expansion (\cref{eqn:lqa}) over the step range containing the loss minimum.
\end{assumption}

\begin{assumption}[Held-Out Descent Direction]
\label{assum:heldout_descent}
For every retained rank-$d$ probe, the directional derivative of the held-out loss is negative, $\bm g^\top \Delta\bm\theta^{(d)} < 0$, where $\bm g$ is the gradient of the held-out batch loss at $\bm\theta_t$. Thus, a positive step along the probe direction initially decreases the held-out loss.
\end{assumption}

\begin{assumption}[Positive Directional Curvature]
\label{assum:positive_curv}
The probed curvature $\Delta\bm\theta^{(d)\top}\mathbf{H}\,\Delta\bm\theta^{(d)}$ is positive. Together with \cref{assum:heldout_descent}, this ensures that the quadratic in \cref{eqn:lqa} has a positive interior minimum. All probes retained in our analysis satisfy this condition.
\end{assumption}

\textbf{Filtering ill-behaved probes.}
The LQA yields a positive interior optimum under \cref{assum:heldout_descent,assum:positive_curv}. We therefore discard any probe that fails either check before aggregating or plotting the results. Across the 64M and 124M out-of-sample spectral probing analyses, 31 of the $6{,}400$ sampled rank probes ($0.48\%$) fail the descent-direction check, while two additional rank probes ($0.03\%$) have negative curvature, giving 33 filtered rank probes ($0.52\%$) in total. The two negative-curvature cases occur at the deepest ranks late in training and are numerically negligible, with curvature magnitudes no greater than $4.0 \times 10^{-7}$; the non-descent cases are likewise concentrated late in training. Thus, every probe included in the analysis satisfies both conditions, while the filtering affects only a tiny fraction of the measurements.

\textbf{Validation against an exact reference: golden-section line search.}
For every LQA probe, the finite-difference curvature estimate of \cref{eqn:fd_curv} uses the fixed displacement $\varepsilon=0.3$. To further test \cref{assum:local_quadratic} directly, we measure the exact optimal step of a probe by a one-dimensional line search along the probe direction, free from any quadratic assumptions. An exponential coarse search first brackets the loss minimum, and then a golden-section refinement in $\log_{10}$-space locates it to a target precision. The search operates purely on forward evaluations of the batch loss but requires tens of evaluations per probe, compared with the single additional forward evaluation required by the LQA estimate of \cref{eqn:fd_curv}. 
We report the loss-optimal step in update-norm units as $s_d^* = \|\eta_d^*\Delta\bm\theta^{(d)}\|_2 = \eta_d^*\|\Delta\bm\theta^{(d)}\|_2$, where the Scion matrix scales $\kappa_j$ are already included in $\Delta\bm\theta^{(d)}$. \cref{fig:lqa_validation} compares $s_d^*$ estimated by the LQA with the line-search reference over 24 checkpoint--batch analyses spanning training stages and both Muon (Scion) and AdamW trajectories (773 probes in total). The two agree closely across more than four orders of magnitude of update norm: the median relative deviation remains near $5\%$ throughout the operationally relevant range ($s_d^* \lesssim 10$), which covers the volatile head and the bulk of the spectrum. Deviations grow only for the flattest tail directions ($s_d^* \gtrsim 10^{2}$), where the loss basin is extremely shallow, so the precise location of the minimum is both harder for the reference search to pin down and less consequential for the resulting profile. This close agreement confirms that the pretraining loss is locally near-quadratic along the probed directions (across training stages and update orientations), justifying the single-forward-pass LQA estimate used in our spectral analysis.

\begin{figure}[ht]
    \centering
    \includegraphics[width=\textwidth]{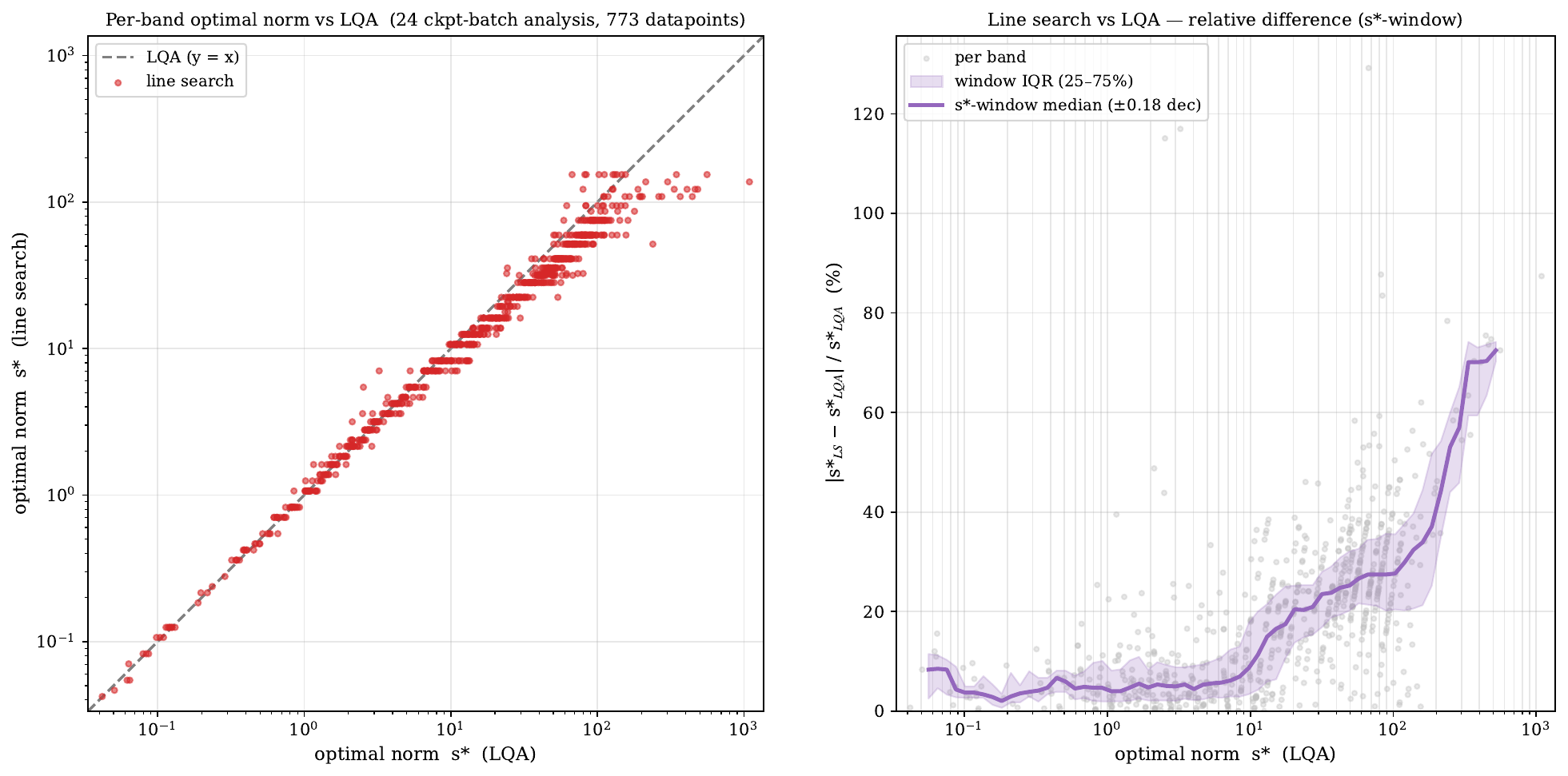}
    \caption{\textbf{Validation of the local-quadratic estimate against the exact line search.} \textbf{Left:} model-wide loss-optimal update norm $s_d^*$, estimated by the LQA (x-axis) and by the golden-section line-search reference (y-axis), for the 773 probe directions sampled from 24 checkpoint--batch analyses spanning all training stages and both Muon and AdamW trajectories; the dashed line marks perfect agreement ($y=x$). \textbf{Right:} relative difference between the two as a function of the LQA estimate, with the windowed median and interquartile range overlaid. The median deviation stays near $5\%$ over the operationally relevant range, rising only for extremely flat directions where the loss minimum is shallow.}
    \label{fig:lqa_validation}
\end{figure}

\section{Spectral Probing of an AdamW Trajectory}
\label{app:adamw_probing}

The spectral profile in \cref{fig:lr_analysis} is measured on checkpoints produced by a Muon run, which raises a natural question: is the measured structure a property of the Transformer loss landscape, or an artefact of the particular trajectory that the update rule of Muon carves through it? To answer this, we repeat the complete out-of-sample probing analysis of \cref{sec:linesearch_analysis} on a standard AdamW trajectory of the same 64M-parameter model under the identical protocol. The gradient of a gradient batch $\mathcal{B}_{\mathrm{g}}$ is integrated into the first-moment (momentum) buffer stored at each AdamW checkpoint, and the same Scion-scaled per-rank probes of \cref{eqn:rank_probe} are measured against a held-out batch with the validated LQA estimate. \cref{fig:adamw_probing} shows the same overall structure as the Muon trajectory. The loss-optimal step rises by more than an order of magnitude from the volatile head into a mostly flat, far more tolerant bulk, and the deep tail again contributes only a modest residual share of the cumulative loss reduction.

The one systematic difference appears at the head and is expected. The whitened update of Muon drives every singular direction at the same shared global scale, so the head sits at the Edge-of-Stability and its measured optimal step is pinned at that scale (the middle panel of \cref{fig:lr_analysis}). The AdamW update, in contrast, is not normalised. The exact update norm realised along the trajectory varies, and the head's loss-optimal step size inherits this variation, so it varies relatively more freely across checkpoints. As a consequence, the optimal step size profile is more variable across checkpoints than along the Muon trajectory (at early checkpoints it even rises across the bulk before the sharp near-head decline), although the overall trend of a volatile head and a more tolerant bulk still holds. This close structural agreement shows that the measured anisotropic profile persists across the evaluated Muon and AdamW trajectories, rather than being specific to the optimiser that generated the trajectory.

\begin{figure}[ht]
    \centering
    \includegraphics[width=\textwidth]{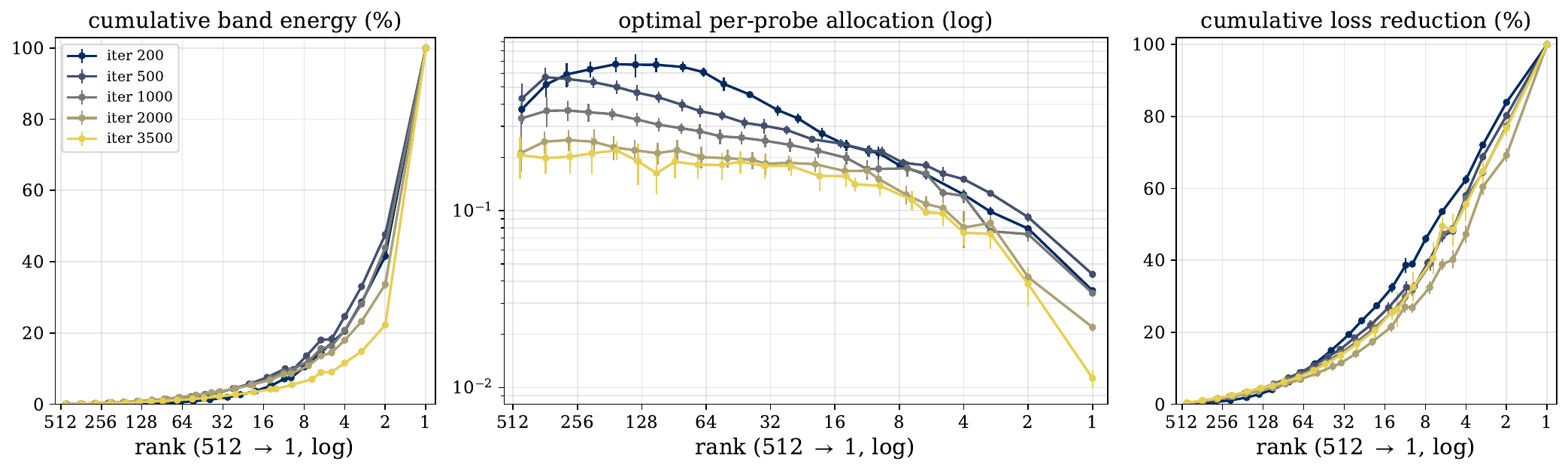}
    \caption{\textbf{Out-of-sample loss-optimal step profile along an AdamW trajectory} (same 64M-parameter ``modded-nanogpt'' model as \cref{fig:lr_analysis}, checkpoints from iterations 200--3500). Out-of-sample spectral probing (\cref{sec:linesearch_analysis}) of the updated first-moment (momentum) buffer, with each per-rank probe's optimal step size measured on a disjoint out-of-sample batch; the horizontal axis runs from the deep tail (rank 512) to the head (rank 1, log scale). \textbf{Left:} cumulative spectral energy, again concentrated in the leading ranks. \textbf{Middle:} per-rank optimal spectral allocation $\eta^*_d$ (the loss-optimal step along each probe; LQA estimate; log axis). As with Muon, the bulk is mostly flat and the optimal allocation drops sharply near the head, which tolerates a far smaller step. Because the AdamW update is unnormalised, the realised update norm varies, and the head's loss-optimal step therefore varies more freely across checkpoints; the slower convergence of AdamW also prolongs the burn-in, giving a more unstable profile in early checkpoints (iterations 200--500) that settles to the same volatile-head shape later (iterations 1000--3500). \textbf{Right:} cumulative loss reduction (LQA estimate): as for Muon, the leading ranks account for most of the achievable reduction and the deep tail contributes a modest residual share. The close structural agreement with \cref{fig:lr_analysis} shows that the volatile-head and tolerant-bulk structure persists across the evaluated Muon and AdamW trajectories.}
    \label{fig:adamw_probing}
\end{figure}

\section{Spectral Probing at Width 768}
\label{app:w768_probing}

The probing analyses of \cref{sec:linesearch_analysis} were run on a width-512 (64M) model. To check that the measured structure is not specific to that scale, the complete out-of-sample probing analysis is repeated on a Muon trajectory of the 124M (width-768) model, matching the smallest trained scale of \cref{sec:experiments} (probing-run setup in \cref{app:experimental_setup}). \cref{fig:w768_probing} gives the same three-panel analysis as \cref{fig:lr_analysis}; the near-head decline of the optimal step size with respect to the log rank axis is shown in the right panel of \cref{fig:lgl_shape} in the main text.

The qualitative observations in \cref{sec:linesearch_analysis} persist at width 768. The profile remains highly anisotropic and stable across checkpoints, with a volatile head near the applied Muon scale, a mostly flat tolerant bulk, and an approximately log-rank-linear near-head transition.

The most relevant visible scale effect is a modest upward shift in the approximate plateau-onset rank, from about 32 at width 512 to about 40 at width 768 (the right panel of \cref{fig:lgl_shape}). Because the transition is gradual, these values should be interpreted as approximate rather than sharply identified thresholds. This shift is consistent with the square-root width heuristic used in \cref{sec:method}, which predicts $k \approx 39$ at width 768. With only two directly probed widths, however, we do not treat this agreement as evidence establishing a general scaling law.

\begin{figure}[ht]
    \centering
    \includegraphics[width=\textwidth]{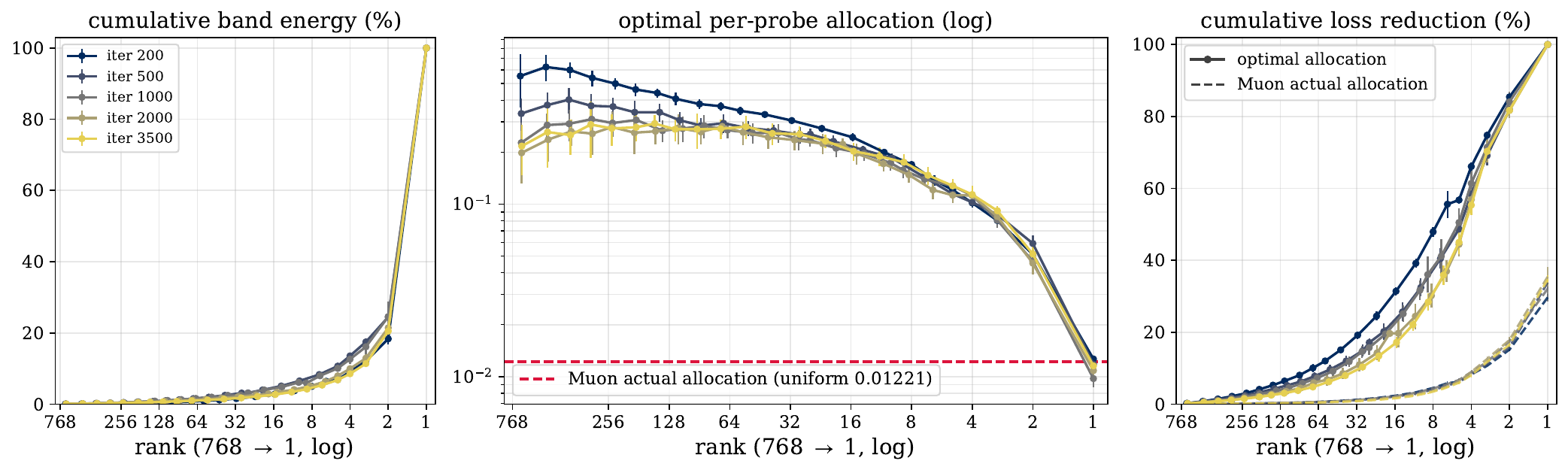}
    \caption{\textbf{Out-of-sample loss-optimal step profile at width 768} (124M model, Muon trajectory). Panels, axes, and protocol as in \cref{fig:lr_analysis}. The qualitative structure of the width-512 analysis is reproduced.}
    \label{fig:w768_probing}
\end{figure}

\section{In-Sample Spectral Probing}
\label{app:insample_probing}

The out-of-sample probing of \cref{sec:linesearch_analysis} measures the step size that reduces loss the most on \emph{unseen} data. For completeness, we also report the \emph{in-sample} variant, in which each probe's optimal step size is measured on the \emph{same} batch whose gradient formed the buffer ($\mathcal{B}_{\mathrm{o}} = \mathcal{B}_{\mathrm{g}}$). This is the quantity that earlier curvature analyses implicitly probe. We report it in the same rank-organised, three-panel layout as the main-text out-of-sample figure (\cref{fig:lr_analysis}), so the two regimes can be compared panel for panel.

\textbf{Observation.} \cref{fig:insample_probing} reports the in-sample profile along the same Muon trajectory, in the same rank-organised, three-panel layout as the out-of-sample figures. The qualitative structure matches the out-of-sample analysis (a volatile head pinned at the Edge-of-Stability and a far more tolerant bulk), but two quantitative differences appear. First, the in-sample optimal allocation declines \emph{monotonically} from the deep tail to the head rather than holding a mostly flat bulk, and the anisotropy spans two to three orders of magnitude, wider than the out-of-sample gap. Second, the cumulative loss reduction under the optimal allocation accrues steadily from the tail, so the bulk and tail appear far more productive than out-of-sample, where the reduction concentrates at the head. Both effects are consistent with in-sample probing partly measuring the optimiser's ability to refit the gradient batch that formed the update; the out-of-sample analysis in the main text removes this confounder and yields the more conservative, generalisation-relevant profile that motivates SAMuon. The per-module sign alignment of \cref{app:sign_alignment} supports the refitting explanation of the in-sample behaviour: tail directions align poorly with the gradients of unseen batches, so their large in-sample tolerance mainly reflects refitting of the sampled batch. The same in-sample inflation of the bulk and tail also occurs along an AdamW trajectory (\cref{fig:insample_probing_adam}), confirming that this batch-refitting confounder is a property of the probe, not of the optimiser that generated the trajectory.

\begin{figure}[ht]
    \centering
    \includegraphics[width=\textwidth]{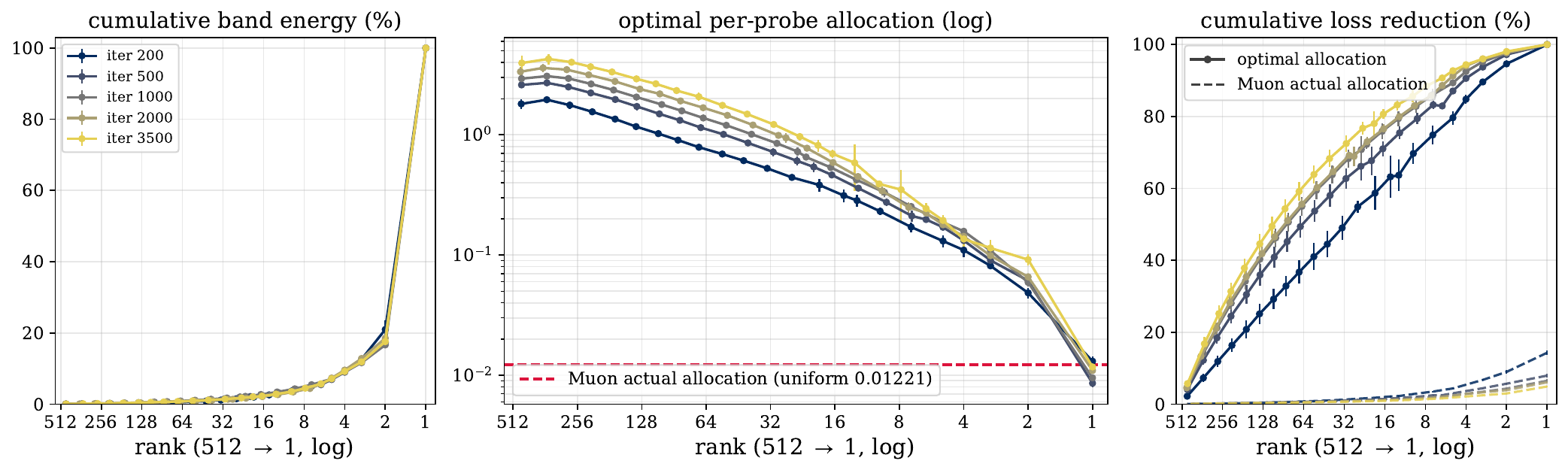}
    \caption{\textbf{In-sample loss-optimal step profile along a Muon trajectory}, in the same panel layout as the out-of-sample \cref{fig:lr_analysis} (same 64M-parameter ``modded-nanogpt'' model and checkpoints), but with each probe's optimal step size measured on the \emph{same} batch whose gradient formed the buffer ($\mathcal{B}_{\mathrm{o}} = \mathcal{B}_{\mathrm{g}}$). Two differences from the out-of-sample profile stand out. First, the entire bulk is raised well above its out-of-sample level (middle panel), widening the anisotropy. Second, the deep tail contributes far more to the idealised cumulative loss reduction (right panel). Both indicate that, in-sample, the bulk mainly yields refitting of the batch that formed the update, a gain that does not transfer to unseen data, which is exactly the confounder the main-text out-of-sample protocol removes.}
    \label{fig:insample_probing}
\end{figure}

\cref{fig:insample_vs_heldout} overlays the two regimes directly along the same Muon trajectory. The out-of-sample optimal allocation (solid) is uniformly more conservative than the in-sample one (dashed) across the entire bulk, with the gap widest in the deep tail, where the in-sample allocation runs several times larger. The per-probe loss-reduction panel tells the same story from the capacity side. For the in-sample case, every probe across the spectrum yields a large reduction; out-of-sample, the tail's contribution collapses and the reduction climbs only towards the head. The in-sample tail gain that does not survive out-of-sample is refitting of the gradient batch, while the residual that does survive is the generalisation-relevant headroom that SAMuon targets. Note that the same in-sample-versus-out-of-sample gap holds along an AdamW trajectory (\cref{fig:insample_vs_heldout_adam}), so this separation is generalisable rather than specific to Muon. Curvature-based accounts such as \citet{MuonCurvature} implicitly measure the in-sample quantity, which our out-of-sample protocol deliberately discounts.

\begin{figure}[ht]
    \centering
    \includegraphics[width=\textwidth]{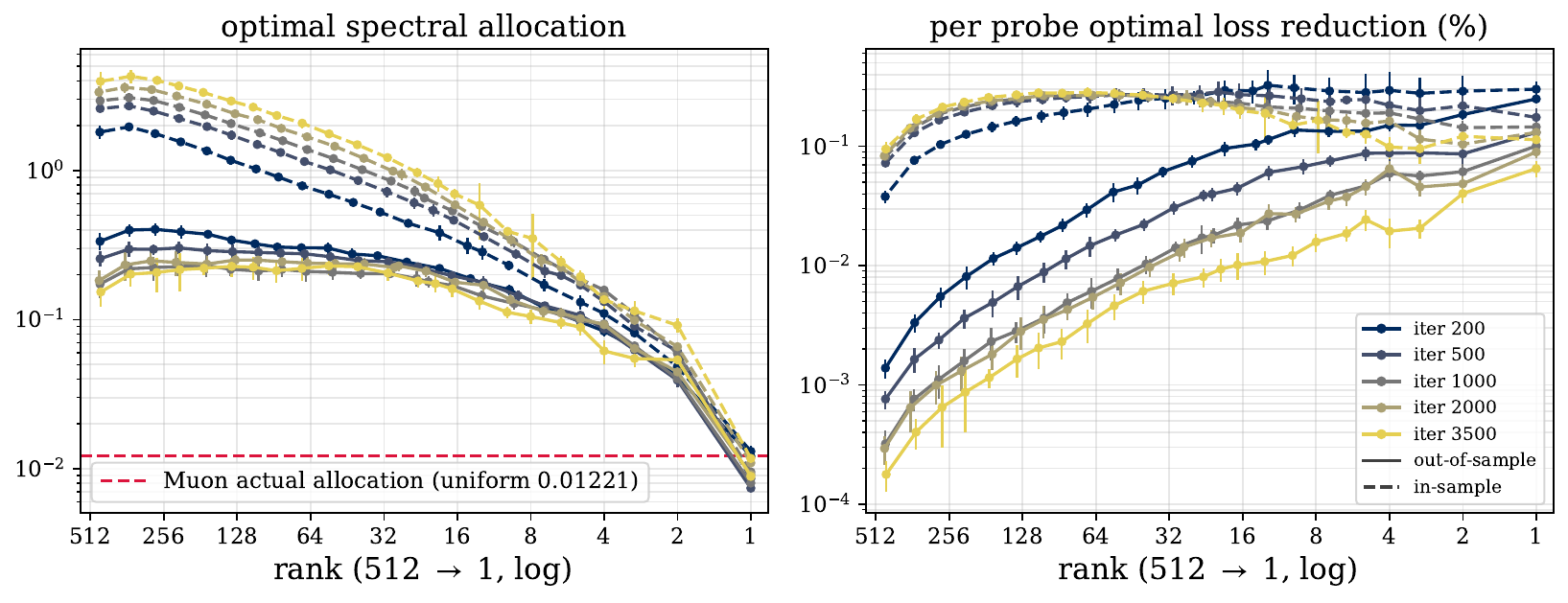}
    \caption{\textbf{In-sample versus out-of-sample spectral profiles along a Muon trajectory} (64M-parameter ``modded-nanogpt'', checkpoints from iterations 200--3500). Solid curves are the out-of-sample (main-text) measurement; dashed curves are the in-sample variant. \textbf{Left:} per-rank optimal spectral allocation. The in-sample allocation is systematically higher across the entire bulk, several times the out-of-sample level. Both collapse together at the head, pinned at the actual Muon scale (red dashed, $0.01221$). \textbf{Right:} per-probe optimal loss reduction (log axis). In-sample, every probe across the spectrum, particularly the deep tail, yields a large reduction; out-of-sample, the per-probe reduction in the tail is far smaller and only climbs towards the head. The gap between the two reflects that the tail spectrum of the gradient batch mainly contributes to overfitting that batch, with limited transferable capacity to out-of-sample batches.}
    \label{fig:insample_vs_heldout}
\end{figure}

\begin{figure}[ht]
    \centering
    \includegraphics[width=\textwidth]{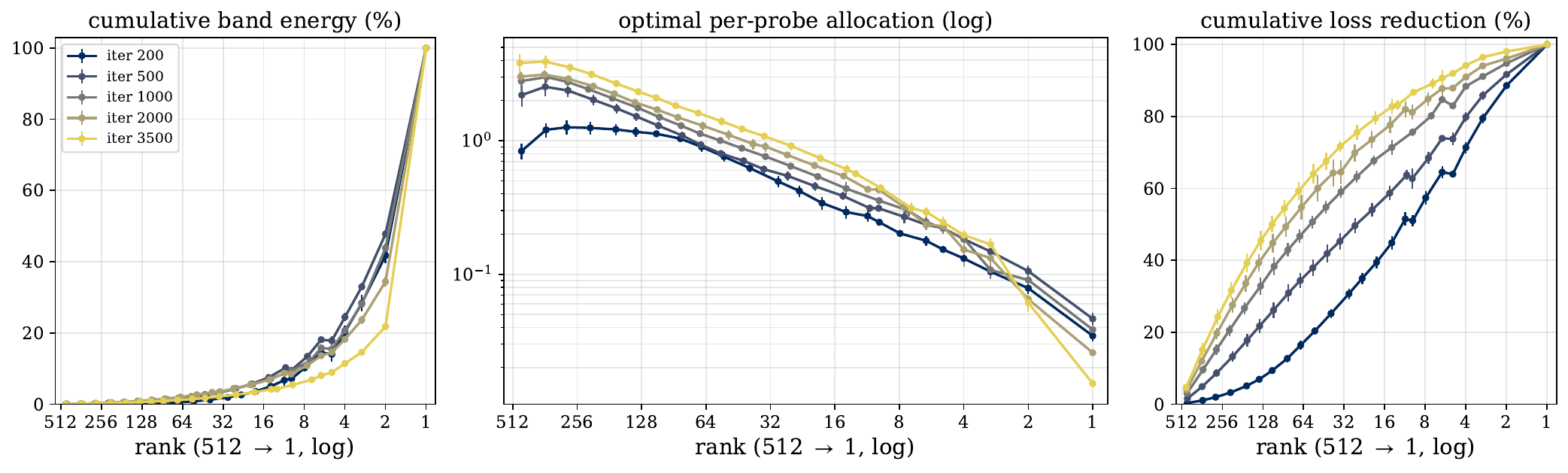}
    \caption{\textbf{In-sample loss-optimal step profile along an AdamW trajectory}, the AdamW analogue of the Muon \cref{fig:insample_probing} and the in-sample counterpart of the out-of-sample \cref{fig:adamw_probing} (same 64M-parameter modded-nanogpt model and checkpoints; three-panel layout as \cref{fig:lr_analysis}), with each probe's optimal step size measured on the \emph{same} batch whose gradient formed the buffer ($\mathcal{B}_{\mathrm{o}} = \mathcal{B}_{\mathrm{g}}$). As for Muon (\cref{fig:insample_probing}), the bulk is lifted above its out-of-sample level (middle panel) and the deep tail contributes far more of the idealised cumulative loss reduction (right panel), the same batch-refitting signatures that the main-text out-of-sample protocol removes.}
    \label{fig:insample_probing_adam}
\end{figure}

\begin{figure}[ht]
    \centering
    \includegraphics[width=\textwidth]{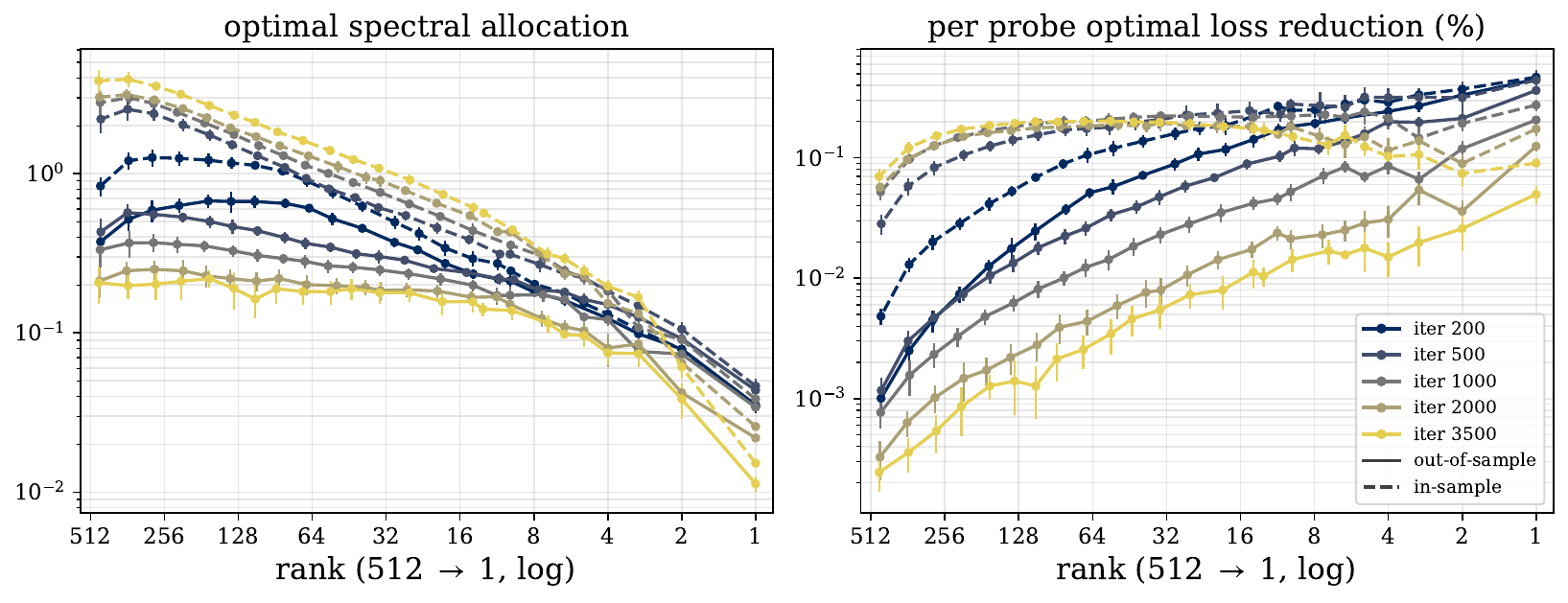}
    \caption{\textbf{In-sample versus out-of-sample spectral profiles along an AdamW trajectory} (64M-parameter ``modded-nanogpt'', checkpoints from iterations 200--3500), the AdamW analogue of the Muon \cref{fig:insample_vs_heldout}; panels and axes as there, with solid curves out-of-sample and dashed in-sample. As for Muon, the in-sample allocation is systematically more permissive across the bulk (widest in the deep tail) and every in-sample probe yields a large loss reduction, whereas out-of-sample the tail's contribution collapses; the gap is the batch-refitting capacity that does not transfer to unseen data.}
    \label{fig:insample_vs_heldout_adam}
\end{figure}

\section{Construction of the Spectral Allocation Comparison}
\label{app:profile_cmp_details}

This section details the construction of the curves in \cref{fig:profile_cmp}, which compares the spectral scaling each optimiser applies against the measured loss-optimal step profile. All curves are computed at checkpoint 1000 of the out-of-sample probing analyses (\cref{sec:linesearch_analysis}), averaged across out-of-sample batches, and head-normalised so that the top singular direction (rank 1, largest $\sigma$) sits at $1$. A single checkpoint is shown for visual clarity. The left panel plots the spectral allocation against spectral rank (log axes). The right panel plots the same allocation against singular value $\sigma$ (linear axes), making explicit the transfer function each optimiser applies to the spectrum.
\begin{itemize}[leftmargin=2em]
    \item \emph{Loss-optimal step profile}: the per-rank loss-optimal step size measured by out-of-sample spectral probing (the measured profile of the Muon trajectory at checkpoint 1000 in \cref{fig:lr_analysis}).
    \item \emph{SGD}: a virtual update that assigns each rank a scale proportional to its average singular value, that is, the allocation produced when SGD applies the momentum buffer directly.
    \item \emph{Adam}: the true Adam update $\mathbf{\Delta}^{\mathrm{A}}$ projected onto each rank of the singular basis of its own buffer, $a_i = \bm u_i^\top \mathbf{\Delta}^{\mathrm{A}} \bm v_i$, that is, the diagonal of $\mathbf{U}^\top \mathbf{\Delta}^{\mathrm{A}} \mathbf{V}$. Only these diagonal terms are tracked. The off-diagonal terms $\bm u_i^\top \mathbf{\Delta}^{\mathrm{A}} \bm v_j$ with $i \neq j$, which describe how Adam couples distinct singular directions, are ignored. This update is sampled from checkpoint 1000 of the AdamW trajectory.
    \item \emph{Muon}: exact whitening, a uniform scale across the spectrum.
    \item \emph{\samuonlite{}}: the two-level allocation of \cref{sec:method} with $k = 1$, holding the head at the same scale as Muon and amplifying the bulk uniformly to $\gamma = 5$, the measured head gap.
    \item \emph{\samuon{}}: the log-rank-linear allocation of \cref{eqn:lgl_profile} at $\gamma = 7.07$, its tuned optimum at all batch sizes (\cref{tab:optim_params}), ramping from the anchored head to the flat tail boost over the leading $k$ ranks.
\end{itemize}
Both the \samuon{} and \samuonlite{} allocations are drawn at their post-warmup target profiles. The comparison summarises the argument of \cref{sec:linesearch_analysis} that the SGD allocation is anti-correlated with the measured profile; Adam dampens but does not overcome the head's dominance; Muon reallocates several times the scale towards the bulk relative to SGD; \samuonlite{} holds the head at the same scale as Muon and lifts the bulk uniformly towards the measured profile; and \samuon{} follows the measured near-head decline itself, thereby providing the closest alignment among the five methods.

\section{Convergence proof for the idealised SAMuon update}
\label{app:convergence_samuon}
In this section, we establish convergence guarantees for the idealised exact-whitening form of the \samuon{} / \samuonlite{} variants of \cref{alg:samuon}. By extending the convergence analysis framework of standard Muon (see Theorem 4.1 in \citet{MuonConvergence}) to accommodate the head-anchored spectral allocation of \cref{subsec:spectral_scaling}, we prove convergence to an $\epsilon$-nuclear-norm stationary point. Although the spectral shaping differs substantially from uniform whitening, the analysis depends on it only through its bounds, so a single argument covers \samuon{}, \samuonlite{}, and other related profiles while mathematically formalising the trade-off introduced by the tail boost $\gamma$.

\subsection{Setup and notation}
Following the standard matrix-structured analysis of Muon, we state the result for one two-dimensional parameter matrix, matching the blockwise form of the optimiser update. Accordingly, the theorem below is a single-matrix guarantee; SAMuon applies the same update independently to every eligible matrix in the model. Consider the task of minimising a non-convex objective function $l: \mathbb{R}^{m \times n} \rightarrow \mathbb{R}$. Let $\mathbf{W}_t \in \mathbb{R}^{m \times n}$ denote the parameter matrix at iteration $t$, and let $r = \min(m, n)$ denote the maximum possible rank of the matrix. 

The optimiser maintains an exponential moving average (EMA) of the stochastic gradients, initialised at zero as in \cref{alg:samuon}, and reshapes its spectrum with the head-anchored allocation of \cref{subsec:spectral_scaling}.

For $t=1,\ldots,T$, the update rule is
\begin{align}
    \mathbf{M}_0 &= \mathbf{0}, \\
    \mathbf{M}_t &= \mu \mathbf{M}_{t-1} + (1 - \mu) \mathbf{G}_t \\
    \mathbf{Q}_t &= \mathrm{SAMuon}(\mathbf{M}_t) \\
    \mathbf{W}_{t+1} &= \mathbf{W}_t - \eta \mathbf{Q}_t
\end{align}
where $\eta$ is a constant effective step size, $\mu \in [0, 1)$ is the momentum parameter, and $\mathbf{G}_t$ is the stochastic mini-batch gradient. Because this is a single-matrix result, the fixed positive Scion scale $\kappa$ is absorbed into $\eta$. As in \cref{sec:preliminaries}, each momentum buffer is assumed to have rank $r$. Here $\mathrm{SAMuon}(\cdot)$ denotes the anisotropic spectral scaling of \cref{eqn:samuon_update}. Following \citet{MuonConvergence}, we idealise the Newton-Schulz iteration as exact whitening. Then each singular direction receives exactly its prescribed scale, with the head remaining at the same unit scale as in Muon, while $\mathbf{Q}_t$ shares the singular vectors of $\mathbf{M}_t$. The guarantee therefore covers this idealised exact-whitening update, as is standard for analyses of this type; the finite-step Newton-Schulz, randomised-SVD, and power-iteration approximations run by the deployed algorithms lie outside the bound.

Consequently, if the Singular Value Decomposition (SVD) of the momentum buffer is $\mathbf{M}_t = \mathbf{U} \mathbf{\Sigma} \mathbf{V}^\top$, the preconditioned update evaluates to $\mathbf{Q}_t = \mathbf{U} \hat{\mathbf{\Sigma}} \mathbf{V}^\top$ with modified singular values
\begin{equation}
    \hat{\sigma}_i =
    \begin{cases}
        s_i(t) & i \le k \quad\text{(the head region, with the head anchored at $\hat\sigma_1 = s_1(t) = 1$)} \\[4pt]
        \gamma_t & i > k \quad\text{(the boosted tail),}
    \end{cases}
\end{equation}
following the head-anchored allocation of \cref{eqn:samuon_update}; the two-level \samuonlite{} is the case $k = 1$, and \samuon{} takes the log-rank-linear scales of \cref{eqn:lgl_profile}. The analysis below depends on this scaling only through its bounds, with the amplification fixed at $\gamma \ge 1$. As the warmup raises the whole profile monotonically from the all-ones Muon scaling towards its target over the first $T_0$ steps (so $1 \le s_i(t) \le \gamma_t \le \gamma$ throughout), all modified singular values satisfy
\begin{equation}
\label{eq:spectral_bounds}
    1 \;\le\; \hat{\sigma}_i \;\le\; \gamma \qquad \text{for all } i \text{ and } t.
\end{equation}
All bounds below are therefore stated in terms of the post-warmup amplification $\gamma$, and hold uniformly over training.

\subsection{Assumptions}
We rely on the following standard assumptions regarding the loss landscape and the stochastic oracle, following \citet{MuonConvergence}. Both landscape assumptions are idealisations relative to Transformer practice: a single global smoothness constant is assumed and the variance bound is assumed uniform along the trajectory.

\begin{assumption}[$L$-Smoothness]
\label{assum:smoothness}
The objective function $l$ has $L$-Lipschitz continuous gradients with respect to the Frobenius norm. For any $\mathbf{W}, \mathbf{W}' \in \mathbb{R}^{m \times n}$:
\begin{equation}
    l(\mathbf{W}) \le l(\mathbf{W}') + \langle \nabla l(\mathbf{W}'), \mathbf{W} - \mathbf{W}' \rangle + \frac{L}{2} \|\mathbf{W} - \mathbf{W}'\|_F^2
\end{equation}
\end{assumption}

\begin{assumption}[Bounded Variance]
\label{assum:variance}
Let $\mathcal{F}_t$ denote the optimisation history before $\mathbf{G}_t$ is sampled. At each step $t$, the stochastic oracle provides a conditionally unbiased gradient estimator with bounded conditional variance:
\begin{align}
    \mathbb{E}[\mathbf{G}_t \mid \mathcal{F}_t] &= \nabla l(\mathbf{W}_t), \\
    \mathbb{E}[\|\mathbf{G}_t - \nabla l(\mathbf{W}_t)\|_F^2 \mid \mathcal{F}_t] &\le \frac{\sigma^2}{B},
\end{align}
where $B$ is the mini-batch size.
\end{assumption}

\begin{assumption}[Lower-Bounded Objective]
\label{assum:bounded_obj}
The objective function $l(\mathbf{W})$ is bounded from below by a global minimum $l^*$.
\end{assumption}

\subsection{Main convergence theorems}

\begin{theorem}[Ergodic Convergence of the Idealised SAMuon Update]
\label{thm:samuon_convergence}
Under \cref{assum:smoothness,assum:variance,assum:bounded_obj}, using the idealised exact-whitening SAMuon update with constant learning rate $\eta$ for $T$ iterations yields the following bound on the expected nuclear norm of the true gradient:
\begin{align}
    \frac{1}{T} \sum_{t=1}^T \mathbb{E}[\|\nabla l(\mathbf{W}_t)\|_*] \le &\frac{l(\mathbf{W}_1) - l^*}{\eta T} + \frac{L r \gamma^2 \eta}{2} \nonumber \\
    &+ \sqrt{r}(1 + \gamma) \left( \frac{\sigma \sqrt{1-\mu}}{\sqrt{B(1+\mu)}} + \frac{\mu\|\nabla l(\mathbf{W}_1)\|_F}{(1-\mu)T} + \frac{\sqrt{r} \gamma \eta \mu L}{1-\mu} \right).
\end{align}
\end{theorem}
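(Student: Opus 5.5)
The plan is to adapt the descent-lemma argument of Theorem 4.1 in \citet{MuonConvergence}. The update enters only through the spectral bounds \cref{eq:spectral_bounds}. First, apply \cref{assum:smoothness} with $\mathbf{W}=\mathbf{W}_{t+1}=\mathbf{W}_t-\eta\mathbf{Q}_t$:
\begin{equation*}
l(\mathbf{W}_{t+1})\le l(\mathbf{W}_t)-\eta\langle\nabla l(\mathbf{W}_t),\mathbf{Q}_t\rangle+\tfrac{L\eta^2}{2}\|\mathbf{Q}_t\|_F^2 .
\end{equation*}
Since $\mathbf{Q}_t=\mathbf{U}\hat{\mathbf{\Sigma}}\mathbf{V}^\top$ with $\hat\sigma_i\le\gamma$, we have $\|\mathbf{Q}_t\|_F^2\le r\gamma^2$. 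After telescoping, this produces the term $\tfrac{Lr\gamma^2\eta}{2}$.

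For the inner product, write $\nabla l(\mathbf{W}_t)=\mathbf{M}_t+\mathbf{E}_t$ with $\mathbf{E}_t=\nabla l(\mathbf{W}_t)-\mathbf{M}_t$. Because $\mathbf{Q}_t$ shares singular vectors with $\mathbf{M}_t$ and $\hat\sigma_i\ge1$,
\begin{equation*}
\langle\mathbf{M}_t,\mathbf{Q}_t\rangle=\textstyle\sum_i\sigma_i\hat\sigma_i\ge\|\mathbf{M}_t\|_*\ge\|\nabla l(\mathbf{W}_t)\|_*-\|\mathbf{E}_t\|_*,
\end{equation*}
and $|\langle\mathbf{E}_t,\mathbf{Q}_t\rangle|\le\|\mathbf{E}_t\|_*\|\mathbf{Q}_t\|_{2}\le\gamma\|\mathbf{E}_t\|_*$. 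Together these give $\langle\nabla l,\mathbf{Q}_t\rangle\ge\|\nabla l(\mathbf{W}_t)\|_*-(1+\gamma)\|\mathbf{E}_t\|_*$, which is the source of the factor $(1+\gamma)$. Then use $\|\mathbf{E}_t\|_*\le\sqrt r\|\mathbf{E}_t\|_F$, sum over $t$, divide by $\eta T$, and apply \cref{assum:bounded_obj}. This yields the first two terms plus $\sqrt r(1+\gamma)\frac1T\sum_t\mathbb{E}\|\mathbf{E}_t\|_F$.

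The main work is bounding the momentum error $\mathbb{E}\|\mathbf{E}_t\|_F$. Unrolling the EMA from $\mathbf{M}_0=\mathbf{0}$ gives
\begin{equation*}
\mathbf{M}_t=(1-\mu)\textstyle\sum_{s=1}^t\mu^{t-s}\mathbf{G}_s .
\end{equation*}
Split $\mathbf{E}_t$ into three pieces, following the standard decomposition:
\begin{enumerate}
\item a noise term $(1-\mu)\sum_s\mu^{t-s}(\nabla l(\mathbf{W}_s)-\mathbf{G}_s)$;
\item a drift term $(1-\mu)\sum_s\mu^{t-s}(\nabla l(\mathbf{W}_t)-\nabla l(\mathbf{W}_s))$;
\item an initialisation bias $\mu^t\nabla l(\mathbf{W}_t)$.
\end{enumerate}

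For the noise term, the martingale-difference structure of \cref{assum:variance} kills the cross terms. Jensen then gives the bound $\sqrt{(1-\mu)^2\sum\mu^{2(t-s)}\sigma^2/B}\le\sigma\sqrt{(1-\mu)/((1+\mu)B)}$.

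For the drift term, use $\|\nabla l(\mathbf{W}_t)-\nabla l(\mathbf{W}_s)\|_F\le L\eta\sum_{j=s}^{t-1}\|\mathbf{Q}_j\|_F\le L\eta\sqrt r\gamma(t-s)$. Combined with $(1-\mu)\sum_{j\ge0}\mu^jj=\mu/(1-\mu)$, this gives $\sqrt r\gamma\eta\mu L/(1-\mu)$. The amplification $\gamma$ enters here through the larger step norm.

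The bias term is the step I expect to be most delicate, because the stated bound has $\|\nabla l(\mathbf{W}_1)\|_F$ rather than $\|\nabla l(\mathbf{W}_t)\|_F$. I would first rewrite the bias relative to $\nabla l(\mathbf{W}_1)$ and absorb the difference $\nabla l(\mathbf{W}_t)-\nabla l(\mathbf{W}_1)$ into the drift sum. This follows the bookkeeping of \citet{MuonConvergence}: there the weight $\mu^t$ attaches to $\nabla l(\mathbf{W}_1)$, and the extra drift is again controlled by $L\eta\sqrt r\gamma$ per step. Summing $\mu^t$ over $t$ gives $\mu/(1-\mu)$, and dividing by $T$ produces $\frac{\mu\|\nabla l(\mathbf{W}_1)\|_F}{(1-\mu)T}$.

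Collecting the three pieces inside the factor $\sqrt r(1+\gamma)$ gives the theorem. The remaining step is to check that the drift constants stay within the stated $\sqrt r\gamma\eta\mu L/(1-\mu)$ once the bias is re-anchored at $\mathbf{W}_1$. If they do not, I would simply track the bias directly with $\|\nabla l(\mathbf{W}_t)\|_F$ and bound that by $\|\nabla l(\mathbf{W}_1)\|_F$ plus the accumulated drift.
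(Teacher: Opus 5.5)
Your proposal is correct and follows the paper's skeleton: the descent lemma with $\|\mathbf{Q}_t\|_F^2\le r\gamma^2$, the alignment $\langle\mathbf{M}_t,\mathbf{Q}_t\rangle\ge\|\mathbf{M}_t\|_*$ from $\hat\sigma_i\ge1$, a $\sqrt r(1+\gamma)$ multiplier on the tracking error, and telescoping. Two details differ.

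\textbf{Cross term.} You use nuclear--spectral duality, $|\langle\mathbf{E}_t,\mathbf{Q}_t\rangle|\le\gamma\|\mathbf{E}_t\|_*\le\sqrt r\gamma\|\mathbf{E}_t\|_F$. The paper uses Frobenius Cauchy--Schwarz, $\|\mathbf{E}_t\|_F\|\mathbf{Q}_t\|_F\le\sqrt r\gamma\|\mathbf{E}_t\|_F$. The constants are the same.

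\textbf{Tracking error.} This is the more substantive difference. You unroll the EMA and bound each pair $\nabla l(\mathbf{W}_t)-\nabla l(\mathbf{W}_s)$ by $(t-s)L\eta\sqrt r\gamma$. The paper instead introduces the noiseless EMA $\mathbf{C}_t$ and the one-step recursion $\mathbf{D}_t=\mu(\mathbf{D}_{t-1}+\nabla l(\mathbf{W}_t)-\nabla l(\mathbf{W}_{t-1}))$, with $\mathbf{D}_1=\mu\nabla l(\mathbf{W}_1)$. That recursion anchors the bias at $\mathbf{W}_1$ automatically and needs only one-step gradient differences.

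The check you left open does go through, but only if you keep the sums finite. Re-anchoring the bias adds $\mu^t(t-1)L\eta\sqrt r\gamma$ to your drift bound. The combined coefficient of $L\eta\sqrt r\gamma$ is then exactly
\begin{equation*}
(1-\mu)\sum_{j=0}^{t-1}j\mu^{j}+(t-1)\mu^{t}=\frac{\mu-\mu^{t}}{1-\mu}\le\frac{\mu}{1-\mu},
\end{equation*}
which is precisely the value the paper's recursion yields. Summation by parts shows why: each one-step difference $\nabla l(\mathbf{W}_j)-\nabla l(\mathbf{W}_{j-1})$ ends up with weight $\mu^{t-j+1}$.

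As written, however, you bound the drift by the infinite series $\mu/(1-\mu)$ first and then add the re-anchoring error separately. That leaves an extra term of order $\sqrt r\gamma\eta L\mu^2/((1-\mu)^2T)$ inside the bracket after averaging over $t$, and this term is not in the stated bound. Your fallback, bounding $\|\nabla l(\mathbf{W}_t)\|_F$ by $\|\nabla l(\mathbf{W}_1)\|_F$ plus the accumulated drift, is the same computation and has the same pitfall.

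The paper's recursion buys brevity and hides this bookkeeping entirely. Your three-way split makes the noise, drift and initialisation contributions explicit, at the cost of needing the exact finite-sum identity above to land on the stated constant.
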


The following choices recover the stochastic and deterministic convergence orders of Muon while retaining the same zero-buffer initialisation as \cref{alg:samuon}.

\begin{corollary}[Stochastic and Deterministic Convergence Rates]
\label{cor:explicit_rate}
Let $\mathcal{D}=l(\mathbf{W}_1)-l^*$ and regard $\gamma$ as fixed.
\begin{enumerate}[label=(\roman*)]
    \item In the stochastic setting, set $B=1$, $\eta=\sqrt{(1-\mu)\mathcal{D}/(rTL)}$, and $1-\mu=\min\{\sqrt{L\mathcal{D}}/(\sigma\sqrt{T}),1\}$. Then
    \begin{equation}
        \frac{1}{T}\sum_{t=1}^T \mathbb{E}[\|\nabla l(\mathbf{W}_t)\|_*]
        \le \mathcal{O}_{\gamma}\!\left(
        \sqrt[4]{\frac{r^2L\mathcal{D}\sigma^2}{T}}
        + \sqrt{\frac{rL\mathcal{D}}{T}}
        + \frac{\sqrt{r}\,\sigma}{\sqrt{T}}
        \right).
    \end{equation}
    \item In the deterministic setting ($\sigma=0$), set $\eta=\sqrt{\mathcal{D}/(rTL)}$ and hold $\mu<1$ fixed independently of $T$. Then
    \begin{equation}
        \frac{1}{T}\sum_{t=1}^T \|\nabla l(\mathbf{W}_t)\|_*
        \le \mathcal{O}_{\gamma}\!\left(\sqrt{\frac{rL\mathcal{D}}{T}}\right).
    \end{equation}
\end{enumerate}
The bounds follow by substituting these choices into \cref{thm:samuon_convergence} and using the standard consequence $\|\nabla l(\mathbf{W}_1)\|_F\le\sqrt{2L\mathcal{D}}$ of $L$-smoothness and lower boundedness; the zero-initialisation term is lower order in both cases. Here $\mathcal{O}_{\gamma}$ suppresses constants depending on the fixed amplification bound $\gamma$ (and, in the deterministic case, on the fixed momentum coefficient).
\end{corollary}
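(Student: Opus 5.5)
The plan is to take the bound of \cref{thm:samuon_convergence} as given and substitute the prescribed $(B,\eta,\mu)$ directly, bounding each of its five terms separately. In both settings the only extra ingredient is $\|\nabla l(\mathbf{W}_1)\|_F\le\sqrt{2L\mathcal{D}}$, which follows from $l^*\le l(\mathbf{W}_1-\tfrac1L\nabla l(\mathbf{W}_1))\le l(\mathbf{W}_1)-\tfrac{1}{2L}\|\nabla l(\mathbf{W}_1)\|_F^2$. All factors $\gamma^2$, $(1+\gamma)$ and $\gamma(1+\gamma)$ are absorbed into $\mathcal{O}_\gamma$.

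\emph{Stochastic case.} Write $\beta=1-\mu\in(0,1]$, set $B=1$ and $\eta=\sqrt{\beta\mathcal{D}/(rTL)}$. The five terms become:
\begin{itemize}
\item the descent term $\mathcal{D}/(\eta T)=\sqrt{rL\mathcal{D}/(\beta T)}$;
\item the smoothness term $\tfrac12 Lr\gamma^2\eta=\tfrac{\gamma^2}{2}\sqrt{\beta rL\mathcal{D}/T}\le\tfrac{\gamma^2}{2}\sqrt{rL\mathcal{D}/T}$;
\item the noise term, bounded by $\sqrt r(1+\gamma)\sigma\sqrt\beta$ using $1+\mu\ge1$;
\item the momentum-lag term $r\gamma(1+\gamma)\eta\mu L/\beta\le\gamma(1+\gamma)\sqrt{rL\mathcal{D}/(\beta T)}$;
\item the zero-initialisation term, bounded by $\sqrt r(1+\gamma)\mu\sqrt{2L\mathcal{D}}/(\beta T)$.
\end{itemize}
So up to $\gamma$-constants everything reduces to three expressions: $\sqrt{rL\mathcal{D}/(\beta T)}$, $\sqrt r\sigma\sqrt\beta$, and the initialisation term.

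Next I split on which branch of the $\min$ defines $\beta$.
\begin{itemize}
\item If $\beta=\sqrt{L\mathcal{D}}/(\sigma\sqrt T)<1$, direct substitution gives $\sqrt{rL\mathcal{D}/(\beta T)}=\sqrt r\,(L\mathcal{D})^{1/4}\sigma^{1/2}T^{-1/4}=\sqrt[4]{r^2L\mathcal{D}\sigma^2/T}$. The term $\sqrt r\sigma\sqrt\beta$ equals the same quantity; this balancing is exactly what the choice of $\beta$ is designed to achieve. For the initialisation term I use $\mu\le1$ and $1/(\beta T)=\sigma/(\sqrt{L\mathcal{D}}\sqrt T)$, which gives at most $\sqrt{2r}(1+\gamma)\sigma/\sqrt T$, the third term of the claimed rate.
\item If $\beta=1$, then $\mu=0$ and the initialisation and momentum-lag terms vanish. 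Moreover $\sigma\sqrt T\le\sqrt{L\mathcal{D}}$, so $\sqrt r\sigma\le\sqrt{rL\mathcal{D}/T}$, and all remaining terms are $\mathcal{O}_\gamma(\sqrt{rL\mathcal{D}/T})$.
\end{itemize}
Summing the cases yields (i).

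\emph{Deterministic case.} Set $\sigma=0$, $\eta=\sqrt{\mathcal{D}/(rTL)}$, and fix $\mu<1$.
\begin{itemize}
\item The descent term is $\sqrt{rL\mathcal{D}/T}$.
\item The smoothness term is $\tfrac{\gamma^2}{2}\sqrt{rL\mathcal{D}/T}$.
\item The noise term vanishes.
\item The momentum-lag term is $\gamma(1+\gamma)\tfrac{\mu}{1-\mu}\sqrt{rL\mathcal{D}/T}$.
\item The initialisation term is at most $\sqrt{2r L\mathcal{D}}(1+\gamma)\mu/((1-\mu)T)\le \sqrt2(1+\gamma)\tfrac{\mu}{1-\mu}\sqrt{rL\mathcal{D}/T}$ for $T\ge1$.
\end{itemize}
Absorbing the $\mu$-dependent constants gives (ii).

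The only delicate step is the stochastic zero-initialisation term. Its $1/\beta$ factor blows up as $\beta\to0$, and it must be shown to remain lower order. The plan is to control it exactly as above, through the identity $1/(\beta T)=\sigma/\sqrt{L\mathcal{D}T}$ in the first case of the split. Everything else is routine algebra.
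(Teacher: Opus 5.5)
Your proposal is correct and takes the same route as the paper: substitute the stated choices of $B$, $\eta$ and $\mu$ into \cref{thm:samuon_convergence}, and control the zero-initialisation term with $\|\nabla l(\mathbf{W}_1)\|_F\le\sqrt{2L\mathcal{D}}$. Your term-by-term algebra checks out, including the case split on which branch of the minimum defines $1-\mu$, a step the paper leaves implicit.
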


\begin{remark}
Setting $\gamma = 1$ (and hence $\gamma_t \equiv 1$, $\hat{\sigma}_i \equiv 1$) specialises \cref{thm:samuon_convergence} to Muon with the zero-buffer initialisation of \cref{alg:samuon}. This initialisation produces the vanishing initial-gradient term in the theorem; the first-gradient initialisation analysed by \citet{MuonConvergence} instead produces a vanishing initial-noise term. Both conventions give the stochastic and deterministic convergence orders stated in \cref{cor:explicit_rate}.

Comparing against this Muon baseline makes the trade-off of the tail boost explicit: while $\gamma$ accelerates descent across the spectrum, it linearly amplifies the stochastic noise floor ($\frac{\sigma}{\sqrt{B}}$) and quadratically amplifies the deterministic penalty. Of our design choices, the head anchor ($\hat{\sigma}_1 = 1$) rests on the empirical Edge-of-Stability measurement of \cref{sec:linesearch_analysis} rather than on the bound, which is indifferent to which rank carries which scale. Furthermore, the warmup schedule withholds the boost during early training, precisely when the momentum tracking error $\|\mathbf{\Delta}_t\|_F$ that the amplification multiplies is at its largest. The bound itself is shape-agnostic within $[1, \gamma]$: the log-rank-linear profile of \samuon{} moderates the scales of the leading ranks, so it satisfies the same guarantee at the same $\gamma$ while pushing only the tail to the peak scale.
\end{remark}

\subsection{Detailed proofs}

\begin{proof}[Proof of \cref{thm:samuon_convergence}]
\textbf{Step 1: The Descent Lemma and the Anisotropic Penalty}\\
We begin by substituting the SAMuon parameter update $\mathbf{W}_{t+1} - \mathbf{W}_t = -\eta \mathbf{Q}_t$ into the $L$-smoothness inequality (\cref{assum:smoothness}):
\begin{equation}
\label{eq:descent_initial}
    l(\mathbf{W}_{t+1}) \le l(\mathbf{W}_t) - \eta \langle \nabla l(\mathbf{W}_t), \mathbf{Q}_t \rangle + \frac{L \eta^2}{2} \|\mathbf{Q}_t\|_F^2
\end{equation}
Because $\mathbf{Q}_t$ is constrained by the spectral bounds of \cref{eq:spectral_bounds}, its singular values are bounded above by $\gamma$. The squared Frobenius norm is geometrically bounded independently of the raw gradient scale:
\begin{equation}
\label{eq:f_norm_bound}
    \|\mathbf{Q}_t\|_F^2 = \sum_{i=1}^r \hat{\sigma}_i^2 \le \sum_{i=1}^r \gamma^2 = r \gamma^2
\end{equation}

\textbf{Step 2: Decomposing the Descent Direction}\\
We evaluate the inner product between the true gradient and the anisotropic update by adding and subtracting the momentum buffer $\mathbf{M}_t$:
\begin{equation}
\label{eq:inner_prod_decomp}
    \langle \nabla l(\mathbf{W}_t), \mathbf{Q}_t \rangle = \langle \mathbf{M}_t, \mathbf{Q}_t \rangle + \langle \nabla l(\mathbf{W}_t) - \mathbf{M}_t, \mathbf{Q}_t \rangle
\end{equation}

\textbf{Step 3: The Nuclear Norm Alignment}\\
Since $\mathbf{M}_t$ and $\mathbf{Q}_t$ share identical singular vectors by design, their trace inner product collapses into a weighted sum of the singular values of $\mathbf{M}_t$. Because \cref{eq:spectral_bounds} guarantees $\hat{\sigma}_i \ge 1$ for all dimensions, we obtain:
\begin{equation}
\label{eq:nuclear_norm_align}
    \langle \mathbf{M}_t, \mathbf{Q}_t \rangle = \sum_{i=1}^r \sigma_i \hat{\sigma}_i \ge \sum_{i=1}^r \sigma_i = \|\mathbf{M}_t\|_*
\end{equation}

\textbf{Step 4: Bounding the Amplified Tracking Error}\\
Let $\mathbf{\Delta}_t = \nabla l(\mathbf{W}_t) - \mathbf{M}_t$ represent the tracking error. Applying the Cauchy-Schwarz inequality, followed by the upper bound $\|\mathbf{Q}_t\|_F \le \sqrt{r} \gamma$ derived from \cref{eq:f_norm_bound} gives:
\begin{equation}
\label{eq:tracking_error_bound}
    \langle \mathbf{\Delta}_t, \mathbf{Q}_t \rangle \ge -\|\mathbf{\Delta}_t\|_F \|\mathbf{Q}_t\|_F \ge -\sqrt{r} \gamma \|\mathbf{\Delta}_t\|_F
\end{equation}

Substituting \cref{eq:nuclear_norm_align,eq:tracking_error_bound} into \cref{eq:inner_prod_decomp} yields:
\begin{equation}
    \langle \nabla l(\mathbf{W}_t), \mathbf{Q}_t \rangle \ge \|\mathbf{M}_t\|_* - \sqrt{r} \gamma \|\mathbf{\Delta}_t\|_F
\end{equation}

Relating $\|\mathbf{M}_t\|_*$ to the true gradient via the reverse triangle inequality ($\|\mathbf{A}\|_* \ge \|\mathbf{B}\|_* - \|\mathbf{A}-\mathbf{B}\|_*$) and the matrix norm bound $\|\mathbf{\Delta}_t\|_* \le \sqrt{r}\|\mathbf{\Delta}_t\|_F$ gives:
\begin{align}
    \|\mathbf{M}_t\|_* &\ge \|\nabla l(\mathbf{W}_t)\|_* - \sqrt{r} \|\mathbf{\Delta}_t\|_F
\end{align}

Combining these establishes a lower bound for the descent direction, exposing the $\gamma$ tracking multiplier:
\begin{equation}
\label{eq:final_descent_direction}
    \langle \nabla l(\mathbf{W}_t), \mathbf{Q}_t \rangle \ge \|\nabla l(\mathbf{W}_t)\|_* - \sqrt{r}(1 + \gamma) \|\mathbf{\Delta}_t\|_F
\end{equation}

\textbf{Step 5: Bounding the EMA Tracking Error}\\
Define the moving average of the true gradients by $\mathbf{C}_0=\mathbf{0}$ and
\begin{equation}
    \mathbf{C}_t=\mu\mathbf{C}_{t-1}+(1-\mu)\nabla l(\mathbf{W}_t).
\end{equation}
The tracking error decomposes as
\begin{equation}
    \|\mathbf{\Delta}_t\|_F
    \le \|\mathbf{M}_t-\mathbf{C}_t\|_F
    + \|\nabla l(\mathbf{W}_t)-\mathbf{C}_t\|_F.
\end{equation}
For the first term, unrolling the two moving averages gives
\begin{equation}
    \mathbf{M}_t-\mathbf{C}_t
    =(1-\mu)\sum_{i=1}^{t}\mu^{t-i}\big(\mathbf{G}_i-\nabla l(\mathbf{W}_i)\big).
\end{equation}
The conditional unbiasedness in \cref{assum:variance} makes the summands a martingale-difference sequence. Hence the cross terms vanish after taking expectations, and Cauchy--Schwarz gives
\begin{align}
    \mathbb{E}[\|\mathbf{M}_t-\mathbf{C}_t\|_F]
    &\le \sqrt{\mathbb{E}[\|\mathbf{M}_t-\mathbf{C}_t\|_F^2]} \\
    &\le \frac{\sigma}{\sqrt{B}}(1-\mu)\sqrt{\sum_{i=1}^{t}\mu^{2(t-i)}}
    \le \frac{\sigma\sqrt{1-\mu}}{\sqrt{B(1+\mu)}}.
\end{align}
For the second term, let $\mathbf{D}_t=\nabla l(\mathbf{W}_t)-\mathbf{C}_t$. Since $\mathbf{C}_0=\mathbf{0}$, $\mathbf{D}_1=\mu\nabla l(\mathbf{W}_1)$. For $t\ge2$, $L$-smoothness and \cref{eq:f_norm_bound} give
\begin{align}
    \|\mathbf{D}_t\|_F
    &\le \mu\|\mathbf{D}_{t-1}\|_F
      +\mu\|\nabla l(\mathbf{W}_t)-\nabla l(\mathbf{W}_{t-1})\|_F \\
    &\le \mu\|\mathbf{D}_{t-1}\|_F+\mu L\eta\sqrt{r}\gamma.
\end{align}
Unrolling this recursion yields
\begin{equation}
    \|\mathbf{D}_t\|_F
    \le \mu^t\|\nabla l(\mathbf{W}_1)\|_F
    +\frac{\sqrt{r}\gamma\eta\mu L}{1-\mu}.
\end{equation}
Combining the two components therefore gives
\begin{equation}
\label{eq:ema_tracking_bound}
    \mathbb{E}[\|\mathbf{\Delta}_t\|_F]
    \le \frac{\sigma\sqrt{1-\mu}}{\sqrt{B(1+\mu)}}
    +\mu^t\|\nabla l(\mathbf{W}_1)\|_F
    +\frac{\sqrt{r}\gamma\eta\mu L}{1-\mu}.
\end{equation}

\textbf{Step 6: Telescoping the Descent Bound}\\
Taking the expectation of the descent lemma (\cref{eq:descent_initial}) and substituting \cref{eq:f_norm_bound,eq:final_descent_direction} gives:
\begin{equation}
    \mathbb{E}[l(\mathbf{W}_{t+1})] \le \mathbb{E}[l(\mathbf{W}_t)] - \eta \mathbb{E}[\|\nabla l(\mathbf{W}_t)\|_*] + \eta\sqrt{r}(1 + \gamma) \mathbb{E}[\|\mathbf{\Delta}_t\|_F] + \frac{L \eta^2 r \gamma^2}{2}
\end{equation}

Rearranging gives the following bound on the expected nuclear norm of the true gradient:
\begin{equation}
    \eta \mathbb{E}[\|\nabla l(\mathbf{W}_t)\|_*] \le \mathbb{E}[l(\mathbf{W}_t)] - \mathbb{E}[l(\mathbf{W}_{t+1})] + \eta\sqrt{r}(1 + \gamma) \mathbb{E}[\|\mathbf{\Delta}_t\|_F] + \frac{L \eta^2 r \gamma^2}{2}
\end{equation}

Summing this inequality from $t=1$ to $T$, noting that $l(\mathbf{W}_{T+1}) \ge l^*$ (\cref{assum:bounded_obj}), and dividing by $\eta T$ yields:
\begin{equation}
    \frac{1}{T} \sum_{t=1}^T \mathbb{E}[\|\nabla l(\mathbf{W}_t)\|_*] \le \frac{l(\mathbf{W}_1) - l^*}{\eta T} + \frac{L r \gamma^2 \eta}{2} + \frac{\sqrt{r}(1 + \gamma)}{T} \sum_{t=1}^T \mathbb{E}[\|\mathbf{\Delta}_t\|_F]
\end{equation}

Substituting \cref{eq:ema_tracking_bound} and using $T^{-1}\sum_{t=1}^{T}\mu^t\le \mu/((1-\mu)T)$ proves \cref{thm:samuon_convergence}.
\end{proof}                                          

\section{Detailed experimental setup}
\label{app:experimental_setup}

In this section, we provide the exact hyperparameters and architectural details to ensure full reproducibility of our results. Our codebase and training pipeline are built directly upon the open-source implementation provided by \citet{scion}.

\subsection{Motivation for using Scion as the ``Muon'' baseline}
\label{app:scion_over_muon}
In our experiments, Muon serves as a crucial strong baseline. Instead of the vanilla Muon implementation, we are in fact using the improved Muon version, Scion \citep{scion}. We select Scion over the vanilla Muon implementation for three strategic reasons: (i) \textit{Efficiency:} Scion replaces the auxiliary Adam optimiser for embeddings with a lightweight signed update rule, reducing memory overhead; (ii) \textit{Stability:} It employs a theoretically grounded normalisation scheme that decouples the learning rate from model scale, facilitating consistent hyperparameter tuning; and (iii) \textit{Performance:} Scion consistently demonstrates improved empirical convergence and large-batch scalability over vanilla Muon.

\subsection{Model architecture}
All models used in this work are decoder-only Transformers based on the GPT-2 architecture, modified with RMSNorm and Rotary Positional Embeddings (RoPE) (refer to \citet{scion} for detailed modifications). The 64M and 124M models used for the spectral probing analyses, and the 124M, 300M, and 1B models used for the training experiments, all share the same depth (12 layers) and head dimension, so width is the only scale axis among them. The vocabulary size is padded to 50,304 to improve kernel efficiency. The complete architectural specifications are summarised in \cref{tab:model_arch}.

\begin{table}[ht]
\caption{\textbf{Model architectures.} The 64M, 124M, 300M and 1B models used for probing and/or training experiments. All utilise a head dimension of 128 and a depth of 12 layers.}
\label{tab:model_arch}
\begin{center}
\begin{small}

\begin{tabular}{lcccc}
\toprule
Specification & \textbf{64M} & \textbf{124M} & \textbf{300M} & \textbf{1B} \\
\midrule
Layers ($L$) & 12 & 12 & 12 & 12 \\
Model Dimension ($d_\text{model}$) & 512 & 768 & 1280 & 2560 \\
Attention Heads ($H$) & 4 & 6 & 10 & 20 \\
Head Dimension ($d_h$) & 128 & 128 & 128 & 128 \\
Vocabulary Size & 50,304 & 50,304 & 50,304 & 50,304 \\
Context Length & 1024 & 1024 & 1024 & 1024 \\
\bottomrule
\end{tabular}

\end{small}
\end{center}
\end{table}

\subsection{Training hyperparameters}

All experiments train on the FineWeb dataset \citep{fineweb} (100B-token sample; ODC-By 1.0 licence) at global batch sizes of 1M, 2M, and 4M tokens (1024, 2048, and 4096 sequences). The 124M and 300M models use a ${\sim}$10B-token budget, with iteration counts of 9600, 4800, and 2400 at the three batch sizes, and the 1B model a ${\sim}$20B-token budget. For the 124M model this budget is ${\sim}4\times$ the Chinchilla-optimal allocation \citep{chinchilla}, falling to ${\sim}1.7\times$ at 300M and ${\sim}1\times$ at 1B.

\textbf{Optimiser Settings.}
All hyperparameters are swept per batch size on the 124M model and transferred across model scales, with the single exception of the AdamW learning rate, which is halved at each scale step ($2^{-8.5} \to 2^{-9.5} \to 2^{-10.5}$ at batch size 1024) following the optimal-learning-rate trend for AdamW reported in \citet{scion}. For AdamW we sweep the learning rate on a half-power-of-two grid around the \citet{scion} value. For Muon we sweep the spectral radius \texttt{radius\_2d} (the learning-rate factor on the 2D, Muon-effective modules) over the $\sqrt{2}$-spaced grid $\{35.36, 50, 70.71, 100, 141.42\}$. Since this sweep is flat around its optimum, both \samuon{} instantiations instead run at the fixed radius $50$, the Scion default, and sweep only the tail boost $\gamma$ over $\{3.54, 5, 7.07, 10, 14.14\}$, a grid centred on the measured head gap. The complete sweeps are reported in \cref{tab:hp_search_124m}, and the per-batch optima are collected in \cref{tab:optim_params}.

\textbf{Probing-run training setup.} The main-text spectral probing analyses are run on the 64M and 124M models, following the \citet{scion} protocol: $5100$ training steps at batch size $512$, using the optimiser hyperparameters specified by \citet{scion}. For the smallest 64M model, the probed Muon trajectory reaches a final validation loss of $3.4292$, and the AdamW trajectory probed in \cref{app:adamw_probing} reaches $3.4913$ under the same protocol. The width-768 (124M) Muon trajectory probed in \cref{app:w768_probing} follows exactly the 124M setup of the same protocol, with the learning rate transferred from the 64M model following \citet{scion}, and reaches a final validation loss of $3.2812$.

\textbf{Profile warmup schedule.} The spectral shaping is itself warmed up. In the full-budget training runs, the \emph{cosine} warmup weight $w_t = \tfrac{1}{2}\big(1 - \cos(\pi \min(t/T_0,\, 1))\big)$ spans the first $30\%$ of training iterations ($T_0 = 0.3\,T$), interpolating the whole profile from the Muon scaling through $\gamma_t = 1 + (\gamma - 1)\, w_t$ and $s_i(t) = 1 + (s_i - 1)\, w_t$, after which it is held at its target. The warmup span mirrors the trapezoidal schedule's final linear-decay phase ($28.5\%$ of training) with a matching ${\sim}30\%$ ramp at the start. This lets the model first settle onto a stable Muon trajectory before the shaping fully engages, and applies identically to both instantiations (for \samuonlite{}, it reduces to the single-scale ramp of $\gamma_t$). The shortened runs used only for the token-efficiency measurement instead retain the fixed warmup horizon $T_0=0.3\,N$ of the corresponding Muon baseline, as detailed in \cref{app:speedup}.

This warmup is not arbitrary since the spectral anisotropy itself \emph{emerges} slowly over early training. \cref{fig:warmup_motivation} tracks the out-of-sample profile along the Muon trajectory and shows that the head gap rises from near-isotropic ($\approx 2$ at iteration 50) towards its final level ($\approx 5$) over the first few hundred iterations, while the head's concentration of spectral energy and directional curvature also climbs at the same time. Applying the full shaping from the start would therefore overdrive a spectrum that is not yet highly anisotropic. Warming the profile instead lets the optimiser track the anisotropy as it emerges and supplies a measurement-grounded justification for delaying the full shaping (the span itself mirrors the schedule's decay phase, as above). The final head gap also anchors the tail-boost \emph{search}. The $\gamma$ grid of our experiments is centred on the measured gap (${\approx}5$), and the optima sit at or moderately above it, namely $\gamma^{*} = 7.07$ for \samuon{} at all batch sizes and between $7.07$ and $10$ for \samuonlite{} (\cref{tab:hp_search_124m}).

\begin{figure}[ht]
    \centering
    \includegraphics[width=\textwidth]{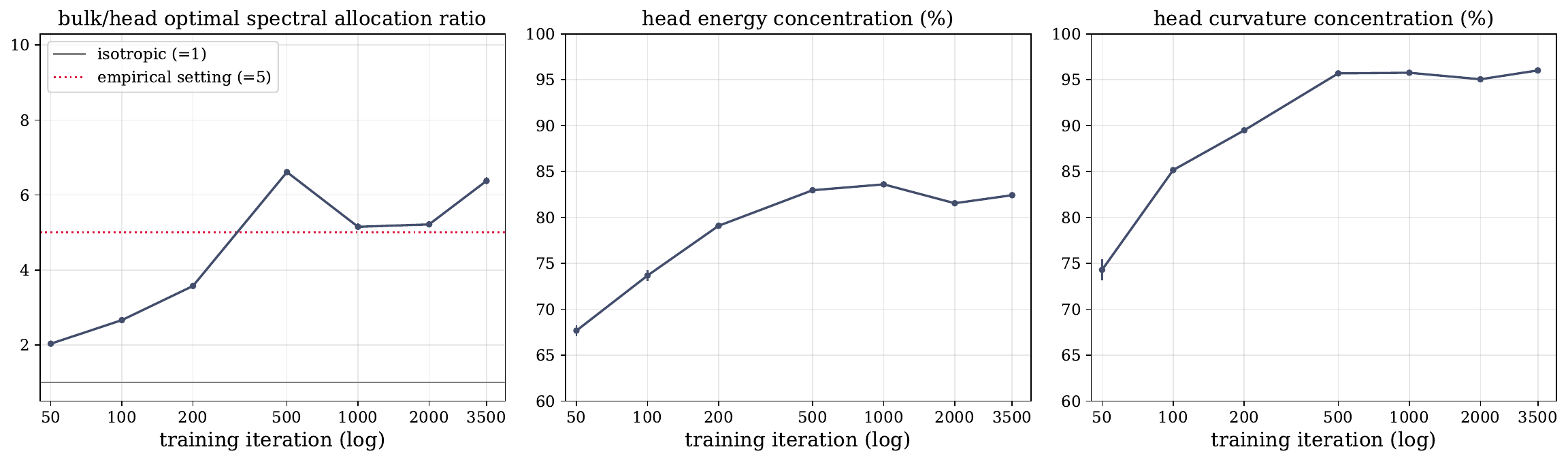}
    \caption{\textbf{The spectral anisotropy emerges over early training, motivating the slow warmup.} Out-of-sample probing along the Muon trajectory (64M-parameter ``modded-nanogpt''), against training iteration (log axis). \textbf{Left:} the head gap (rank-2 over rank-1 optimal spectral allocation ratio, the largest single gap in the measured profile) rises from near-isotropic ($\approx 2$) at iteration 50 towards its final level of $\approx 5$ (red dotted), on which the $\gamma$ search grid of our experiments is centred, within a few hundred iterations, and stays there; the isotropic-curvature assumption ($=1$) is violated throughout and increasingly so. \textbf{Middle:} the head's concentration of spectral energy, $\sigma_1^2 / \sum_i \sigma_i^2$; \textbf{Right:} the head's concentration of directional curvature, $\big(\Delta\bm\theta^{(1)\top}\mathbf{H}\,\Delta\bm\theta^{(1)}\big) \big/ \sum_d \Delta\bm\theta^{(d)\top}\mathbf{H}\,\Delta\bm\theta^{(d)}$, using the rank-$d$ directional curvature of \cref{eqn:fd_curv}; both concentrate over the same window, from ${\sim}70\%$ to ${\sim}85$--$95\%$. Early in training the spectrum is closer to isotropic, so \samuon{} and \samuonlite{} delay the full anisotropic shaping and warm the profile from the uniform scaling of Muon to its target over the first $30\%$ of training (much slower than standard warmup schedules).}
    \label{fig:warmup_motivation}
\end{figure}

\begin{table}[ht]
\caption{\textbf{Optimiser hyperparameters.} All values are tuned on the 124M model per batch size, with optima listed as bs $1024/2048/4096$ and the full sweeps in \cref{tab:hp_search_124m}. Both \samuon{} instantiations run at the fixed spectral radius $50$. The profile rank $k$ of \samuon{} follows the fixed width rule described below and is not tuned. All values transfer across model scales, except the AdamW learning rate, which is halved at each scale step.}
\label{tab:optim_params}
\begin{center}
\begin{small}
\setlength{\tabcolsep}{4pt}

\begin{tabular}{l|cccc}
\toprule
\textbf{Parameter} & \textbf{AdamW} & \textbf{Muon (Scion)} & \textbf{\samuonlite{}} & \textbf{\samuon{}} \\
\midrule
\multicolumn{5}{l}{\textit{Global Settings}} \\
Weight Decay & 0.0 & 0.0 & 0.0 & 0.0 \\
Gradient Clip & 1.0 & None & None & None \\
LR Schedule & Trapezoidal & Trapezoidal & Trapezoidal & Trapezoidal \\
LR Warmup & 5\% & 0\% & 0\% & 0\% \\
\midrule
\multicolumn{5}{l}{\textit{Hyperparameter sweeps at 124M (optima at bs 1024/2048/4096)}} \\
LR / radius grid & $2^{x}$, half steps & $\{35.36, \dots, 141.42\}$ & fixed at 50 & fixed at 50 \\
Optimal LR / radius & $2^{-8.5}/2^{-9}/2^{-8.5}$ & $70.71/50/70.71$ & 50 & 50 \\
Tail boost $\gamma$ grid & -- & -- & $\{3.54, \dots, 14.14\}$ & $\{3.54, \dots, 14.14\}$ \\
Optimal $\gamma$ & -- & -- & $10/7.07/10$ & $7.07$ (all) \\
Profile rank $k$ & -- & -- & 1 & $39/50/71$ \\
Profile warmup & -- & -- & 30\% (cosine) & 30\% (cosine) \\
\midrule
\multicolumn{5}{l}{\textit{Transfer to 300M and 1B}} \\
Across model scales & LR halved per step & unchanged & unchanged & $k$ by width rule \\
\bottomrule
\end{tabular}

\end{small}
\end{center}
\end{table}

\subsection{Newton--Schulz Parameters}
\label{app:ns_parameters}

All Muon-based optimisers in our experiments use the same custom $4 \times 3$ Newton--Schulz scheme: four iterations, each requiring three matrix multiplications. For a normalised input $\mathbf{X}_1$, iteration $\ell$ applies
\begin{equation}
    \mathbf{X}_{\ell+1}
    = a_\ell \mathbf{X}_\ell
    + b_\ell \bigl(\mathbf{X}_\ell\mathbf{X}_\ell^\top\bigr)\mathbf{X}_\ell
    + c_\ell \bigl(\mathbf{X}_\ell\mathbf{X}_\ell^\top\bigr)^2\mathbf{X}_\ell,
    \qquad \ell=1,\dots,4.
    \label{eq:ns_iteration}
\end{equation}
The iteration-specific coefficients in \cref{tab:ns_coefficients} are optimised to produce a stable response close to one across the operative range of normalised singular values. \Cref{fig:ns_response} shows the resulting response after four iterations; for normalised input singular values $\sigma \ge 0.02$, the maximum response error is $0.0069$.

\begin{table}[ht]
\caption{Coefficients of the custom $4 \times 3$ Newton--Schulz scheme.}
\label{tab:ns_coefficients}
\centering
\small
\begin{tabular}{c r r r}
\toprule
Iteration $\ell$ & $a_\ell$ & $b_\ell$ & $c_\ell$ \\
\midrule
1 & 5.30697775 & $-9.73226547$ & 4.52926445 \\
2 & 3.99123669 & $-4.20899105$ & 1.18235242 \\
3 & 2.66316843 & $-2.16650701$ & 0.56325209 \\
4 & 1.93040931 & $-1.31219244$ & 0.38289258 \\
\bottomrule
\end{tabular}
\end{table}

\clearpage
\begin{figure}[H]
    \centering
    \includegraphics[width=0.55\textwidth]{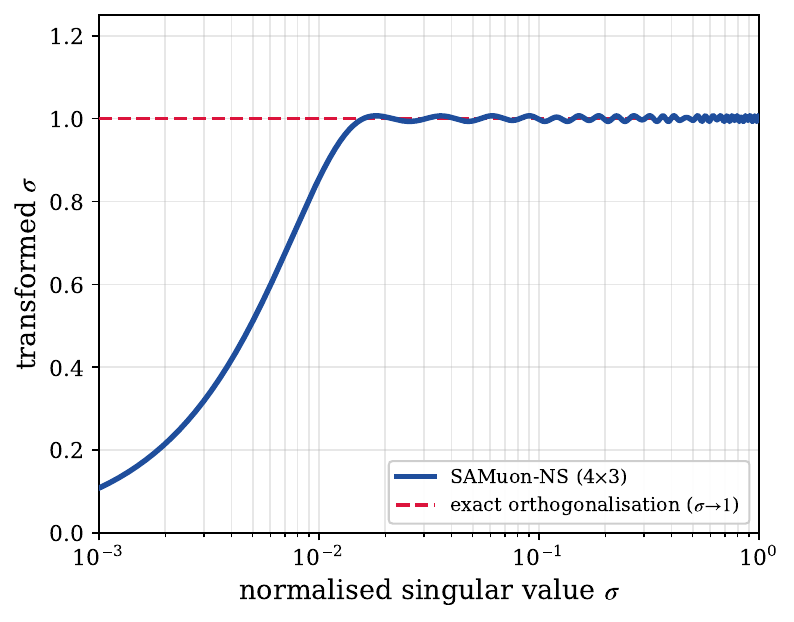}
    \caption{Response of the custom $4 \times 3$ Newton--Schulz scheme after four iterations. The dashed line denotes exact orthogonalisation, which maps every non-zero singular value to one. For normalised input singular values $\sigma \ge 0.02$, the maximum response error is $0.0069$.}
    \label{fig:ns_response}
\end{figure}

\subsection{Full sweep results}
\label{app:full_sweeps}

\cref{tab:hp_search_124m} reports all sweep results underlying the tuned hyperparameters in \cref{tab:optim_params}, covering the AdamW learning rate, the Muon spectral radius, and the tail boost of both \samuon{} instantiations. Note that \samuon{} attains a best-of-sweep loss at least as low as \samuonlite{} at all three batch sizes, with the two effectively tied at batch size 1024, and that its optimum is at $\gamma = 7.07$ in all cases whereas the \samuonlite{} optimum varies.

\begin{table}[t]
\caption{\textbf{Hyperparameter searches at 124M} (width 768, FineWeb-100B, ${\sim}$10B tokens), for each batch size. Entries are final validation loss, and the per-batch optimum of each sweep is in bold. AdamW is swept over learning rate ($2^{x}$), Muon over its spectral radius, and both \samuon{} instantiations over the tail boost $\gamma$ at the re-anchored radius $50$, the Scion default. The Muon optimum shifts with batch size ($70.71/50/70.71$), whereas the \samuon{} optimum is $\gamma = 7.07$ at all batch sizes. Searches at the superseded radius $70.71$ are omitted for the \samuon{} instantiations.}

\vspace{4pt}
\centering
\small
\setlength{\tabcolsep}{6pt}
\setlength{\aboverulesep}{0.35ex}
\setlength{\belowrulesep}{0.5ex}
\begin{tabular}{l rrr}
\toprule
Hyperparameter & bs 1024 & bs 2048 & bs 4096 \\
\midrule
\multicolumn{4}{l}{\textit{AdamW --- learning rate ($\log_2$)}} \\
\quad $-10.5$ & 3.2363 & -- & -- \\
\quad $-10$ & -- & 3.2620 & -- \\
\quad $-9.5$ & 3.2113 & -- & 3.3266 \\
\quad $-9$ & -- & \textbf{3.2410} & -- \\
\quad $-8.5$ & \textbf{3.2105} & -- & \textbf{3.3050} \\
\quad $-8$ & -- & 3.2462 & -- \\
\quad $-7.5$ & 3.2360 & -- & 3.3392 \\
\quad $-7$ & -- & 3.2963 & -- \\
\quad $-6.5$ & -- & -- & 3.4059 \\
\midrule
\multicolumn{4}{l}{\textit{Muon --- spectral radius}} \\
\quad 35.36 & 3.1625 & 3.1768 & 3.2046 \\
\quad 50 & 3.1641 & \textbf{3.1726} & 3.1979 \\
\quad 70.71 & \textbf{3.1622} & 3.1743 & \textbf{3.1953} \\
\quad 100 & 3.1641 & 3.1757 & 3.1965 \\
\quad 141.42 & 3.1649 & 3.1761 & 3.1986 \\
\midrule
\multicolumn{4}{l}{\textit{\samuon{} --- $\gamma$ at radius 50}} \\
\quad 3.54 & 3.1473 & 3.1535 & 3.1734 \\
\quad 5 & 3.1445 & 3.1502 & 3.1686 \\
\quad 7.07 & \textbf{3.1435} & \textbf{3.1484} & \textbf{3.1658} \\
\quad 10 & 3.1442 & 3.1486 & 3.1662 \\
\quad 14.14 & -- & 3.1504 & -- \\
\midrule
\multicolumn{4}{l}{\textit{\samuonlite{} --- $\gamma$ at radius 50}} \\
\quad 3.54 & 3.1462 & 3.1553 & 3.1724 \\
\quad 5 & 3.1445 & 3.1523 & 3.1699 \\
\quad 7.07 & 3.1449 & \textbf{3.1511} & 3.1680 \\
\quad 10 & \textbf{3.1438} & 3.1516 & \textbf{3.1678} \\
\quad 14.14 & -- & 3.1550 & -- \\
\bottomrule
\end{tabular}

\label{tab:hp_search_124m}
\end{table}

\subsection{\samuon{} / \samuonlite{} configurations}

Both instantiations run at the fixed spectral radius $50$ and add the tail boost $\gamma$, swept per batch size on the 124M model, with the selected optima summarised in \cref{tab:optim_params} and the complete sweeps reported in \cref{tab:hp_search_124m}. The tail boost is ramped from the Muon value over the cosine profile warmup spanning the first $30\%$ of training. Setting $\gamma = 1$ recovers Muon exactly. \samuonlite{} captures the head with a rank-1 power iteration (\cref{sec:method}). \samuon{} instead requires the leading $k$ singular pairs of the momentum buffer, computed per step with \texttt{torch.svd\_lowrank}. Its profile rank follows the width rule $k(d_\text{model}) = \left\lfloor 32\sqrt{d_\text{model}/512}\,\right\rfloor$ below, \textit{i.e.}\ $k = 39/50/71$ at the trained widths $768/1280/2560$, held fixed and not tuned.

\textbf{Low-rank estimator hyperparameters.} For the 124M, 300M, and 1B models, respectively, \samuon{} uses $k=39/50/71$, with \texttt{torch.svd\_lowrank} parameters $q=44/55/76$ (five oversampling dimensions) and $n_{\mathrm{iter}}=4/5/6$. At the same model scales, \samuonlite{} uses a rank-one power iteration with $10/12/14$ iterations.

\textbf{Choice of the profile rank $k$.} The plateau onset measured on the width-512 probing model is $k \approx 32$ (the left panel of \cref{fig:lgl_shape}); the width rule $k(d_\text{model}) = \left\lfloor 32\sqrt{d_\text{model}/512}\,\right\rfloor$ transfers this calibration across model widths, giving $k = 39/50/71$ at the trained widths $768/1280/2560$. We present the square-root rule as a calibrated heuristic rather than a derived law. However, we believe this heuristic should be reliable due to the following two reasons. First, the profile is mostly flat beyond the log-rank-linear band, so a moderately mis-set $k$ mostly re-labels plateau ranks whose target scale is close to $\gamma$ regardless. Second, the sweeps of \cref{tab:hp_search_124m} are flat around their optima, indicating that the allocation tolerates moderate mis-specification of its scales. Under this rule \samuon{} attains the best or joint-best final loss at every trained width (\cref{sec:main_results}). A power-law dependence of the spectral structure of the Muon momentum matrices on model size has been independently documented by \citet{lowrankMuonBad}; the randomised low-rank estimate itself is accurate and well-conditioned at these ranks \citep{lowrankSVD}, and LiMuon \citep{limuon} provides a precedent for running a randomised SVD of the momentum buffer inside a Muon-style optimiser. Directly measuring the plateau onset at the two largest trained widths, rather than transferring it, is left for future work; the width-768 probing of \cref{app:w768_probing} shows a modest shift of the approximate plateau onset in the direction predicted by the rule, although additional widths are needed to test the proposed scaling.

\subsection{Wall-clock cost of the optimiser update step}
\label{app:wallclock}

To quantify the per-iteration overhead discussed in \cref{subsec:complexity}, we time one full training iteration of the 1B model under its real batch-size-1024 configuration on a single NVIDIA RTX 6000 Ada GPU; this benchmark serves the wall-clock evaluation only, while the training runs of \cref{sec:experimental_setup} execute on A100 GPUs. Per iteration, the GPU performs 64 microbatch forward+backward passes. \cref{tab:wallclock} reports the GPU wall-clock breakdown. The forward+backward computation dominates at $91.2\%$ of the Muon iteration, with the Newton-Schulz whitening accounting for the remaining $8.8\%$. On top of this, the rank-1 power iteration of \samuonlite{} adds $41.2$ ms ($0.5\%$ of the Muon iteration), confirming its wall-clock parity with Muon. The rank-$k$ \texttt{torch.svd\_lowrank} call of \samuon{} adds $657.8$ ms ($7.4\%$), nearly as much as the Newton-Schulz whitening itself despite its far lower FLOP count ($O(mnk)$ against $O(mnr)$, \cref{subsec:complexity}), which reflects the unoptimised kernel rather than the arithmetic cost of the method.

\begin{table}[ht]
\caption{\textbf{Per-GPU wall-clock cost of one training iteration} (1B model, batch-size-1024 configuration), measured on a single NVIDIA RTX 6000 Ada GPU. The full Muon iteration is the forward+backward computation plus the Newton-Schulz whitening ($8916$ ms in total), and percentages are relative to this total. The \samuon{} and \samuonlite{} rows give the \emph{additional} cost of each variant's head estimate on top of the Muon iteration.}

\vspace{4pt}
\centering
\small
\setlength{\aboverulesep}{0.35ex}
\setlength{\belowrulesep}{0.5ex}
\begin{tabular}{l r r}
\toprule
Component & ms/iter & \% of Muon iter \\
\midrule
Forward + backward ($\times 64$ microbatches, compiled, bf16) & 8133.8 & 91.2\% \\
Muon: Newton-Schulz ($4 \times 3$) & 782.1 & 8.8\% \\
\midrule
\samuon{}: $+$\,\texttt{torch.svd\_lowrank} & 657.8 & 7.4\% \\
\samuonlite{}: $+$\,power iteration (bf16) & 41.2 & 0.5\% \\
\bottomrule
\end{tabular}

\label{tab:wallclock}
\end{table}

\section{Measuring the token-efficiency improvement}
\label{app:speedup}

For each model and batch-size setting, we train the tuned-Muon baseline for $N$ iterations and use
its final validation loss as the target. We then ask how short a SAMuon or SAMuon-lite run can be
while still reaching that target. Every candidate run of length $T$ uses a complete learning-rate
schedule, including the final warmdown, compressed into $T$ iterations. We therefore compare final
losses from fully annealed runs, rather than comparing the loss curves at an intermediate iteration.

If $T^\star$ is the estimated number of iterations needed to match the target, the token-efficiency
improvement is
\begin{equation}
\text{improvement} = 1 - \frac{T^\star}{N}.
\label{eq:speedup-def}
\end{equation}
For example, matching the baseline after $0.8N$ iterations gives a $20\%$ improvement.

% <<<GENERATED-TABLE (speedup_full_table.py)
\begin{table}[!t]
\caption{\textbf{Token-efficiency improvement of SAMuon over tuned Muon at matched final validation loss.} $N$ is the tuned-Muon schedule length, and $T^\star$ is the estimated SAMuon or SAMuon-lite schedule length that matches Muon's final loss, found using Algorithm~\ref{alg:speedup} and checked with a confirmation run. SAMuon uses spectral radius $50$; Muon uses its tuned radius for each batch size. The shaping warmup remains fixed at $0.30N$ absolute steps, making the reported improvements conservative. The 1B runs use 20B tokens; the others use 10B.}

\vspace{4pt}
\centering
\small
\setlength{\tabcolsep}{6pt}
\setlength{\aboverulesep}{0.35ex}
\setlength{\belowrulesep}{0.5ex}
\begin{tabular}{ll c r rr}
\toprule
 &  &  &  & \multicolumn{2}{c}{Token-effic. improv. $1-T^\star/N$} \\
\cmidrule(lr){5-6}
Model & Batch & Tokens & $N$ & SAMuon & SAMuon-lite \\
\midrule
124M (w768) & 1024 & 10B & 9600 & 20.3\% & 20.3\% \\
 & 2048 & 10B & 4800 & 23.8\% & 21.5\% \\
 & 4096 & 10B & 2400 & 24.0\% & 22.1\% \\
\midrule
300M (w1280) & 1024 & 10B & 9600 & 17.7\% & 15.9\% \\
 & 2048 & 10B & 4800 & 18.8\% & 17.1\% \\
 & 4096 & 10B & \multicolumn{3}{c}{\textit{not trained}} \\
\midrule
1B (w2560) & 1024 & 20B & 19200 & 13.3\% & 13.3\% \\
 & 2048 & 20B & \multicolumn{3}{c}{\textit{not trained}} \\
 & 4096 & 20B & 4800 & 15.6\% & 14.4\% \\
\bottomrule
\end{tabular}

\label{tab:speedup-grid}
\end{table}
% GENERATED-TABLE>>>

\subsection{Iteration matching}

Longer fully annealed runs generally achieve lower final validation loss over the range considered.
We therefore search for the run length at which the variant reaches the baseline target. We start
with candidate lengths of $0.70N$ and $0.90N$, corresponding to improvements of $30\%$ and $10\%$.
The shorter run should finish above the target loss, whereas the longer run should match or improve
upon it. If they do not bracket the target, we expand the search interval.

Within this interval, false-position interpolation uses the two endpoint losses to propose the next
run length. We execute a complete, fully annealed run at that length and replace the shorter or
longer endpoint according to whether the run misses or reaches the target. We repeat this process
until the two endpoints differ by at most $0.005N$ iterations. Finally, we estimate $T^\star$ by
linear interpolation between the runs nearest the target and execute a slightly longer confirmation
run to check that the target is reached.

\begin{algorithm}[t]
\caption{Iteration matching for the token-efficiency measurement.}
\label{alg:speedup}
\begin{algorithmic}[1]
\Require tuned-Muon schedule length $N$ and final validation loss; SAMuon variant
\State Verify that a fully annealed $N$-iteration variant run reaches the baseline loss
\State Set the shorter and longer candidate lengths to $0.70N$ and $0.90N$
\State Train and evaluate fully annealed variant runs at both candidate lengths
\State Expand the interval if necessary until the shorter run misses the target and the longer run reaches it
\While{the candidate lengths differ by more than $0.005N$}
  \State Choose the next length by false-position interpolation between the endpoint lengths and losses
  \State Train and evaluate a fully annealed variant run at the proposed length
  \If{the final loss is above the baseline target}
    \State Replace the shorter endpoint with this run
  \Else
    \State Replace the longer endpoint with this run
  \EndIf
\EndWhile
\State Estimate $T^\star$ by linear interpolation between the runs nearest the target
\State Run a schedule half the final interval width longer than $T^\star$ and confirm that it reaches the target
\State \Return $1-T^\star/N$
\end{algorithmic}
\end{algorithm}

Final validation losses vary slightly between runs, partly because the optimiser uses a randomised
low-rank SVD. The search therefore retains executed runs on both sides of the target rather than
relying on a single interpolated estimate. The additional, slightly longer run provides a
conservative check that the reported matched length reaches the target despite this variation.

The SAMuon shaping warmup is fixed at $0.30N$ absolute iterations and is not shortened with the
candidate schedule. A matched run with $T<N$ therefore spends more than $30\%$ of its iterations
in warmup (between $34.6\%$ and $39.5\%$ for the settings in \cref{tab:speedup-grid}). This
puts the shortened runs at a disadvantage, so the reported token-efficiency improvements are
conservative.

\section{Per-Module Sign Alignment}
\label{app:sign_alignment}

The main-text probe of \cref{sec:linesearch_analysis} applies each weight matrix's rank-$d$ direction in the fixed orientation $-\bm u_{j,d}\bm v_{j,d}^\top$ that a real optimiser uses, and clusters by rank across weight matrices. The resulting profile therefore averages over an inter-block \emph{sign} structure, \textit{i.e.}\ how often a weight matrix's buffer direction agrees with the out-of-sample gradient, which \cref{fig:sign_alignment} records. This agreement measures how trustworthy a direction is. A direction that agrees with the gradients of unseen batches makes transferable progress, whereas a direction that flips sign from batch to batch has a greater chance of disrupting training. The tail probes align poorly under this measure, which explains why the in-sample analysis of \cref{app:insample_probing} is far more optimistic along the tail: in-sample, each direction is evaluated on the batch whose gradient formed the buffer, where its alignment is much better. The alignment also differs by weight-matrix type. The attention projections, and the query and key projections in particular, show the worst tail alignment, so their tail directions deserve the least trust; the QK-clip mechanism used in large-scale Muon training \citep{qkclip} restrains exactly these matrices, so the probing framework predicts training behaviour observed in practice. This cross-matrix agreement further determines whether contributions add or cancel within a rank, and is one source of the mostly flat bulk.

\begin{figure}[ht]
    \centering
    \includegraphics[width=0.9\textwidth]{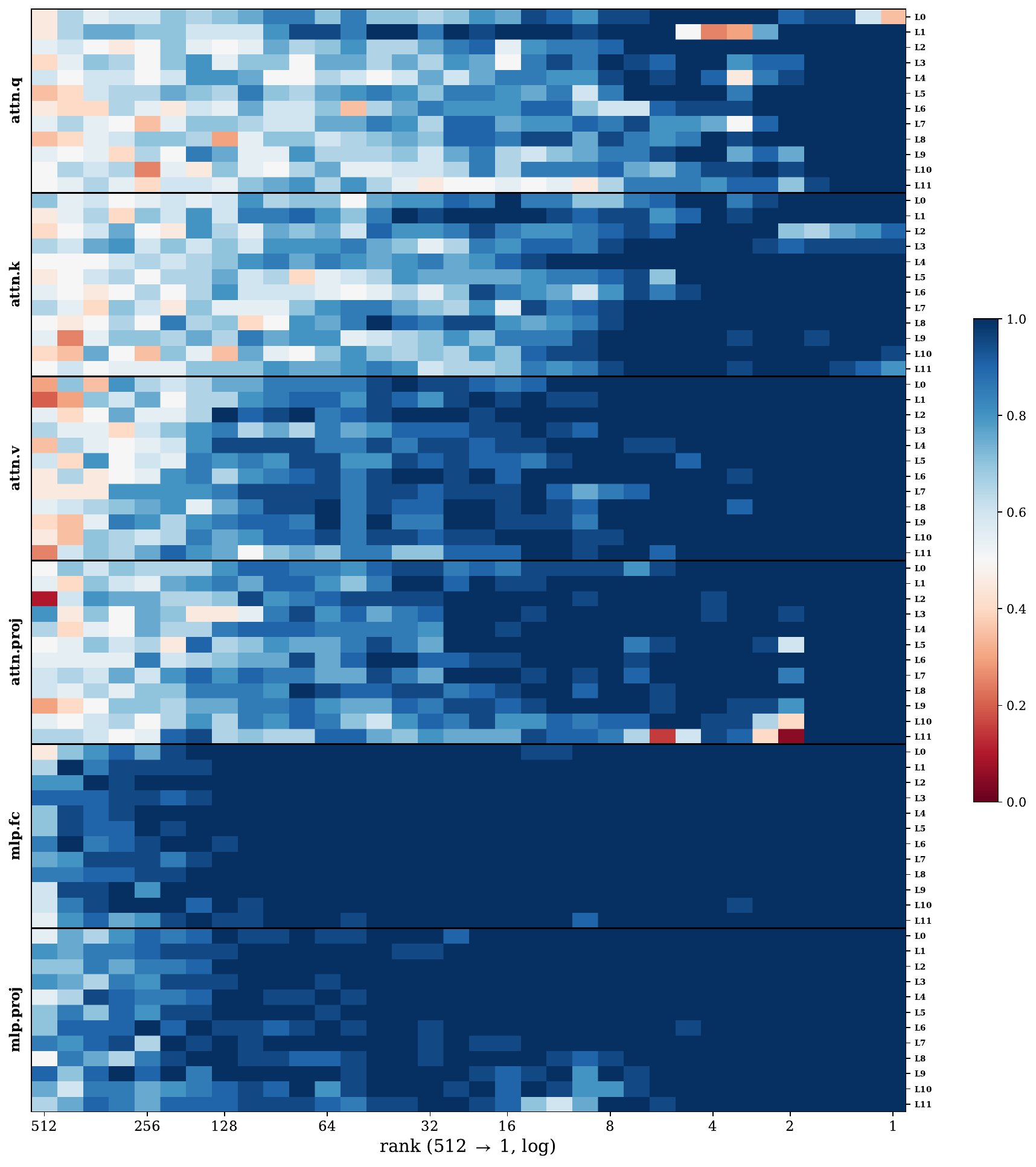}
    \caption{\textbf{Per-weight-matrix sign-alignment probability.} For the checkpoint at iteration 1000, the probability that a weight matrix's rank-$d$ buffer direction $\bm u_{j,d}\bm v_{j,d}^\top$ is aligned (has positive projection) with the out-of-sample gradient, estimated over 20 held-out batches. Rows group the six weight types (attention query, key, value, and output projections; multilayer perceptron (MLP) up- and down-projections), each shown for all twelve layers L0--L11; columns run from the deep tail (rank 512) to the head (rank 1, log axis). Blue marks near-certain alignment, red near-certain anti-alignment, and white a coin-flip. Two patterns stand out. First, the top few ranks are almost always aligned (the deep-blue right edge across all weight matrices), so the head's orientation is effectively deterministic. Second, in the tail the four attention projections drift towards $0.5$ and below, \textit{i.e.}\ their low-energy directions flip sign freely from batch to batch, whereas the MLP projections stay strongly aligned across almost the entire spectrum. The query and key projections show the worst tail alignment, so their tail directions are the least trustworthy; compare this observation with the QK-clip practice of \citet{qkclip}. This weight-matrix-dependent sign-flip structure is the inter-block variation that our per-rank clustering averages over (\cref{sec:discussion}); the cross-batch cancellation it produces in the tail is one source of the mostly flat bulk.}
    \label{fig:sign_alignment}
\end{figure}

\end{document}